\documentclass[final,12pt]{clear2024} % Include author names

% The following packages will be automatically loaded:
% amsmath, amssymb, natbib, graphicx, url, algorithm2e
% !TEX root =  main.tex
\usepackage{times}
\usepackage[utf8]{inputenc}
\usepackage[titletoc]{appendix}
\usepackage{multicol}
\usepackage[utf8]{inputenc}
\usepackage{bm}
\usepackage{titlesec}
\usepackage{marginnote}
\usepackage{mathtools}
\usepackage{algorithm}
\usepackage{algpseudocode}
\usepackage{caption}
\usepackage{subcaption}
\usepackage{booktabs}
\usepackage[noabbrev]{cleveref}
\usepackage[framemethod=TikZ]{mdframed}
\usepackage{thmtools,thm-restate}
\usepackage[export]{adjustbox}
\usepackage{rotating}
\usepackage{xspace}
\usepackage{wrapfig}

% Paragraphs
\newcommand{\spara}[1]{\smallskip\noindent{\textbf{#1}}}

%% Tight spacing in item lists
\newenvironment{squishlist}
{\begin{list}{$\bullet$}
  {\setlength{\itemsep}{0pt}
   \setlength{\parsep}{3pt}
   \setlength{\topsep}{3pt}
   \setlength{\partopsep}{0pt}
   \setlength{\leftmargin}{2em}
   \setlength{\labelwidth}{1.5em}
   \setlength{\labelsep}{0.5em} } }
{\end{list}}

\newenvironment{squishlisttight}
{\begin{list}{$\bullet$}
  {\setlength{\itemsep}{0pt}
   \setlength{\parsep}{0pt}
   \setlength{\topsep}{0pt}
   \setlength{\partopsep}{0pt}
   \setlength{\leftmargin}{2em}
   \setlength{\labelwidth}{1.5em}
   \setlength{\labelsep}{0.5em} } }
 {\end{list}}

\newenvironment{code}
{\bgroup\leftskip 20pt\rightskip 20pt \small\noindent{\bfseries
Code:} \ignorespaces}%
{\par\egroup\vskip 0.25ex}

% Utils

%% Math
\newtheorem{assumption}{Assumption}

\newtheorem{problem}{Problem}

%number sets
\newcommand{\nat}{\mathbb{N}\xspace}
\newcommand{\real}{\mathbb{R}\xspace}

%matrix sets
\newcommand{\multiblock}{\mathsf{B}\xspace}

%graph
\newcommand{\vertexset}{\ensuremath{\mathcal{V}}\xspace}

\newcommand{\causalset}{\ensuremath{\mathcal{E}^s}\xspace}
\newcommand{\causalsetj}{\ensuremath{\mathcal{E}^{s,j}}\xspace}
\newcommand{\idiosyncraticset}{\ensuremath{\mathcal{I}_p^s}\xspace}

\newcommand{\backboneset}{\ensuremath{\mathcal{A}_p}\xspace}
\newcommand{\estimatedbackboneset}{\ensuremath{\widehat{\mathcal{A}}_p}\xspace}
\newcommand{\estimatedbackbonesetj}{\ensuremath{\widehat{\mathcal{A}}_p^j}\xspace}
\newcommand{\candidateuniverse}{\ensuremath{\ensuremath{\mathcal{U}_p}}\xspace}

\newcommand{\candidateedge}{\ensuremath{u_{lm}}\xspace}
\newcommand{\estimatedindividualcausalgraph}{\ensuremath{\widehat{G}^s}\xspace}
\newcommand{\estimatedcausalset}{\ensuremath{\widehat{\mathcal{E}}}\xspace}
\newcommand{\estimatedindividualcausalset}{\ensuremath{\estimatedcausalset^s}\xspace}

\newcommand{\candidatecausalgraph}{\ensuremath{\widehat{G}^s_{\estimatedbackboneset}}\xspace}

%matrices and vectors

\newcommand{\data}{\ensuremath{\mathbf{X}^s}\xspace}
\newcommand{\signal}{\ensuremath{\mathbf{x}^s}\xspace}
\newcommand{\sample}{\ensuremath{\mathbf{x}^s[n]}\xspace}
\newcommand{\multidata}{\ensuremath{\widetilde{\mathbf{X}}^s}\xspace}
\newcommand{\multidataT}{\ensuremath{\widetilde{\mathbf{X}}^{s^\top}}\xspace}
\newcommand{\multisample}{\ensuremath{\widetilde{\mathbf{x}}^s[n]}\xspace}
\newcommand{\multisignal}{\ensuremath{\tilde{\mathbf{x}}^{s,j}}\xspace}
\newcommand{\causalstructure}{\ensuremath{\mathbf{C}^s}\xspace}
\newcommand{\causalstructureT}{\ensuremath{\mathbf{C}^{s^\top}}\xspace}
\newcommand{\estimatedcausalstructure}{\ensuremath{\widehat{\mathbf{C}}^s}\xspace}
\newcommand{\estimatedcausalstructureT}{\ensuremath{\widehat{\mathbf{C}}^{s^\top}}\xspace}
\newcommand{\candidatecausalstructure}{\ensuremath{\widehat{\mathbf{B}}^s}\xspace}

\newcommand{\estimatedvariance}{\ensuremath{\widehat{\mathbf{\Sigma}}^s}\xspace}
\newcommand{\multinoise}{\ensuremath{\mathbf{Z}^s}\xspace}

\newcommand{\backbonemask}{\ensuremath{\mathbf{M}_B}\xspace}
\newcommand{\idiosyncraticmask}{\ensuremath{\mathbf{M}_I^s}\xspace}
\newcommand{\summask}{\ensuremath{\mathbf{M}^s}\xspace}
\newcommand{\weights}{\ensuremath{\mathbf{W}^s}\xspace}
\newcommand{\likelihood}{\ensuremath{\mathcal{L}}\xspace}

\definecolor{amber}{rgb}{1.0, 0.75, 0.0}
\definecolor{applegreen}{rgb}{0.55, 0.71, 0.0}

% Colorlinks
\usepackage{xcolor}
\hypersetup{
   bookmarks, pdftex,
   colorlinks=true,
   pagebackref=true, backref=page,
   linkcolor={red!50!black},
   filecolor={green!50!black},
   citecolor={green!50!black}, 
   urlcolor={blue!80!black},
}
\urlstyle{same}

\title[Extracting the Multiscale Causal Backbone of Brain Dynamics]{Extracting the Multiscale Causal Backbone of Brain Dynamics}

% Authors with different addresses:
\clearauthor{
 \Name{Gabriele D'Acunto}\Email{gabriele.dacunto@uniroma1.it}\\
 \addr DIAG, Sapienza University of Rome\\
 \addr Centai Institute, Turin, Italy
 \AND
 \Name{Francesco Bonchi}\Email{francesco.bonchi@centai.eu}\\
 \addr Centai Institute, Turin, Italy
 \AND
 \Name{Gianmarco {De Francisci Morales}}\Email{gdfm@acm.org}\\
 \addr Centai Institute, Turin, Italy
 \AND
 \Name{Giovanni Petri}\Email{giovanni.petri@nulondon.ac.uk}\\
 \addr Network Science Institute, Northeastern University London\\
 \addr Centai Institute, Turin, Italy
 }

\begin{document}

\maketitle

\begin{abstract}%
  The bulk of the research effort on brain connectivity revolves around statistical associations among brain regions, which do not directly relate to the causal mechanisms governing brain dynamics.
  Here we propose the multiscale causal backbone (MCB) of brain dynamics, shared by a set of individuals across multiple temporal scales, and devise a principled methodology to extract it.

  Our approach leverages recent advances in multiscale causal structure learning and optimizes the trade-off between the model fit and its complexity.
  Empirical assessment on synthetic data shows the superiority of our methodology over a baseline based on canonical functional connectivity networks.
  When applied to resting-state fMRI data, we find sparse MCBs for both the left and right brain hemispheres.
  Thanks to its multiscale nature, our approach shows that at low-frequency bands, causal dynamics are driven by brain regions associated with high-level cognitive functions; at higher frequencies instead, nodes related to sensory processing play a crucial role.
  Finally, our analysis of individual multiscale causal structures confirms the existence of a causal fingerprint of brain connectivity, thus supporting the existing extensive research in brain connectivity fingerprinting from a causal perspective.
\end{abstract}

\smallskip
\begin{code}%
    \url{https://github.com/OfficiallyDAC/cb}%
\end{code}

\smallskip
\begin{keywords}%
  Causal structure learning, multiscale causal backbone, brain connectivity, fMRI data.%
\end{keywords}

% !TEX root =  ../main.tex
\section{Introduction}\label{sec:introduction}
The study of brain connectivity is a fundamental pursuit in neuroscience that has a rich history spanning decades~\citep{felleman1991distributed,biswal1995functional}.
The development of magnetic resonance imaging (MRI) paved the way to
\emph{connectomics}~\citep{sporns2005human,bullmore2009complex}, i.e., the study of the brain as a network, where nodes represent brain regions of interest (ROIs).\footnote{Regions of interest (ROIs), encompassing millions of neurons, are constructed by aggregating neighboring voxels, which are the finest units in 3D images.
This aggregation procedure, named parcellation, is based on the observation that adjacent voxels display correlated activities, acting as a single entity.}
Arcs among ROIs are defined either by observing anatomic fiber density (\emph{structural connectome}), or by exploiting the link between neural activity and blood flow and oxygenation\footnote{This technique is known as functional magnetic resonance imaging, or fMRI for short.} to associate a time series to each ROI and then creating a link between two ROIs that exhibit co-activation, i.e., strong correlation in their time series (\emph{functional connectome})~\citep{fox2005human}.
%,uddin2008network,smith2015positive}.

Typical measures of functional connectivity between ROIs can be categorized into undirected measures, which capture statistical dependence without specifying the direction, and directed measures, which explore statistical causation by specifying the direction of the dependencies~\citep{wiener1956theory,granger1969investigating}.
Directed connectivity measures often rely on the assumption that the cause precedes the effect, and thus focus on estimating lagged dependencies.
Examples of bivariate measures include \emph{Granger causality}~\citep{granger1969investigating} and \emph{transfer entropy}~\citep{schreiber2000measuring}.
Extensions to the multivariate case include \emph{conditional Granger causality}~\citep{ding2006granger}, \emph{partial Granger causality}~\citep{guo2008partial}, \emph{directed transfer function} (DTF, \citealp{kaminski1991new}), and \emph{partial directed coherence} (PDC, \citealp{baccala2001partial}) (see ~\citet{blinowska2011review,bastos2016tutorial} for comprehensive reviews).
%Despite venturing into the realm of causality, 
Directed functional connectivity measures cannot be considered causal in the strict sense, because they do not describe the underlying causal mechanisms that generate the observed time series.
%Thus, an improvement over the state of the art lies in using a structural causal modeling approach~\citep{pearl2009causality} for a better analysis of brain connectivity given observational data.

One step closer to causal modeling is \emph{effective connectivity}~\citep{harrison2003multivariate}, which studies directional and causal relationships between ROIs to understand how information flows within the brain.
Effective connectivity is considered a first-order data feature: a cause of the observed functional connectivity~\citep{razi2016connected}.
The reference model for effective connectivity is the \emph{dynamic causal model} (DCM) by \citet{friston2003dynamic}.
% DCM is essentially a state-space model where the behavior and coupling of $K$ hidden neuronal states, $\mathbf{x}[t] \in \real^K$, determine the variation of the observed signals $\mathbf{y}[t] \in \real^K$.
DCM is essentially a state-space model where the behavior and coupling of $K$ hidden neuronal states determine the variation of $K$ observed signals.
However, DCM is not a causal structure learning method, but rather a model for testing specific causal hypotheses.
Conversely, our work aims to infer causal relationships in a data-driven manner.

Specifically, given a group of individuals, where each individual is described by fMRI data (i.e., a set of time series corresponding to different ROIs, where the ROIs are the same across the individuals), we tackle the learning problem of discovering the \emph{multiscale causal backbone} (MCB), the largest causal network shared by individuals that still preserves personal idiosyncrasies. 
Studying the brain connectivity of groups is crucial for several tasks, such as detecting diseases and disorders, designing effective therapeutic interventions, and mapping brain networks~\citep{craddock2012whole,bassett2017network}.
Conscious that brain functions vary over different time scales~\citep{jacobs2007brain, ide2017detrended}, we approach the discovery of brain causal mechanisms from a multiscale perspective.
A \emph{time scale} refers to one of the resolutions at which a given signal (ROI time series) is analyzed.
The coarsest time scales correspond to the low-frequency components of the signal, whereas the finest to the high-frequency ones.
For instance, one might be interested in coarser time scales to examine neuronal oscillations during sleep.
Conversely, the finer time scales might be relevant to understand how neuronal populations fire in response to a specific stimulus.

We develop our method by leveraging recent advances in multiscale causal structure learning (MS-CASTLE, \citealp{d'acunto2023multiscale}) to retrieve individual multiscale DAGs, i.e., a multi-layer causal graph where the brain ROIs correspond to graph nodes, and time scales to graph layers.
%MS-CASTLE accommodates linear, instantaneous, and lagged causal dependencies that can occur at different frequency bands.
% Instantaneous causal dependencies can occur when the causal dynamics happen faster than the observation frequency, and their presence can affect the estimation of lagged interactions~\citep{hyvarinen2010estimation}.
MS-CASTLE falls under functional causal structure learning methods, as it expresses a node's value as a function of parent nodes.
It is a gradient-based causal discovery method that builds upon the continuous approximation of the acyclicity property of a graph~\citep{zheng2018dags}.
Given the learned individual multiscale DAGs, we adopt the \emph{minimum description length} (MDL) principle to detect the MCB which optimizes the trade-off between the model's fit (i.e., the likelihood of observing the data given the MCB and individual idiosyncrasies) and its complexity (i.e., the number of arcs in the multiscale causal graph consisting of the MCB and the individual idiosyncrasies).

\noindent
\textcolor{black}{
\spara{Contributions.}
From a methodological perspective:
\begin{squishlist}
    \item We formally define the learning problem of discovering the MCB from a sample of individual fMRI recordings in a \emph{resting state}.
    \item We devise a methodology to solve this learning task that first uses MS-CASTLE~\citep{d'acunto2023multiscale} to learn individual multiscale DAGs, and then runs a score-based search procedure on top of the learned individual multiscale DAGs.
    The proposed score-based search procedure is related to score-based causal structure learning methods~\citep{heckerman1995learning,chickering2002optimal,gu2020learning}.
    The score function we optimize however represents a generalization to the multi-subject, multiscale setting of the widely used scores for learning Gaussian Bayesian networks, e.g., the BIC score~\citep{schwarz1978estimating}.
    Our methodology differs from multi-domain causal structure learning methods~\citep{ghassami2018multi,zeng2021causal,perry2022causal}, which instead typically learn a single causal structure from multi-domain data.
    Indeed, the weights of the arcs composing the MCB vary over the subjects. 
    This way, we can learn the causal structure shared across multiple subjects while preserving the idiosyncracies of each individual.
    Our empirical assessment on synthetic data shows the superiority of our method over a baseline based on functional connectivity networks.
\end{squishlist}
}

\textcolor{black}{
From an application standpoint:
\begin{squishlist}
    \item Using the proposed methodology, we study a dataset of resting-state functional magnetic resonance imaging (rs-fMRI), constituted by $100$ healthy subjects, publicly available as part of the Human Connectome Project (HCP,~\citealp{smith2013resting}).
    We find that MCBs can be detected for both the left and right hemispheres and that they are sparse.
    Our method also allows us to discover the ROIs driving the brain activity at different frequency bands. In particular, we find two sets of ROIs that play different roles depending on the considered frequency band.  
    The first set of nodes, relevant at lower frequencies, is associated with high-level cognitive functions, such as self-awareness and introspection; 
    while the second set, significant for higher frequencies, includes nodes that are key drivers in sensory processing.
    \item We compare the learned MCBs with the single-scale causal backbones (SCBs).
    Despite the agreement of single-scale and multiscale analyses on the identification of the key drivers within the system at hand, the SCBs result in an aggregated version of the MCBs.
    This aggregation hinders the capability to localize (in the frequency domain) changes in the causal structure, and to distinguish that higher-level cognitive functions prevail at lower frequencies, i.e., below $0.1$ Hz, as also supported by \citet{biswal1995functional,fox2005human,van2010intrinsic}.
    \item We investigate the viability of \emph{causal fingerprinting}, i.e., using the variability of individual multiscale DAGs to identify individuals in large-scale studies.
    We show that causal fingerprinting is indeed possible at various hard-thresholding levels and that it outperforms canonical fingerprinting based on functional connectivity alone.
\end{squishlist}
}

\noindent
\textcolor{black}{
\spara{Roadmap.}
\Cref{sec:preliminaries} provides the notation and background information needed to formally define the problem we tackle in this paper.
Next, \Cref{sec:learningMDAGs} details the learning problem of individual multiscale DAGs and its solution.
Then, \Cref{sec:learningMCB} formally defines the MCB, poses the corresponding learning problem, and devises the score-based search procedure to solve it.
\Cref{sec:synth_data} compares our proposal to baseline methods on synthetic data, while \Cref{sec:results} shows our application to fMRI resting-state data.
Finally, \Cref{sec:conclusions} concludes and outlines future research directions.
}
% !TEX root =  ../main.tex
\section{Preliminaries}\label{sec:preliminaries}
% Here we provide the notation and background information needed to formally define the problem we tackle in this paper.

%To properly pose our learning problem formulation, let us introduce the notation we use henceforth, along with essential notions about \emph{(i)} how we analyze input fMRI data at different time resolutions, and \emph{(ii)} how we adhere to the MDL.  

\spara{Notation.} 
The range of integers from $1$ to $Y$ is denoted by $[Y]$.
% , $[Y]_0$ if the zero is included.
Scalars are lowercase, $y$, vectors are lowercase bold, $\mathbf{y}$, and matrices are uppercase bold, $\mathbf{Y}$.
Given $\mathbf{y} \in \real^{Y}$, the diagonal matrix of size $Y$ by $Y$ having $\mathbf{y}$ as main diagonal is $\mathrm{diag}(\mathbf{y})$.
The set of block-diagonal matrices of $\real^{\bar{Y} \times \bar{Y}}, \, \bar{Y}=BY \in \nat$, having $B$ blocks of size $Y$ by $Y$ is denoted by $\multiblock_{\bar{Y}}$.
% Analogously, $\multilogicblock_{\bar{Y}}$ denotes the set of block-diagonal matrices of $\{0,1\}^{\bar{Y} \times \bar{Y}}, \, \bar{Y}=BY \in \nat$, having $B$ blocks of size $Y$ by $Y$.
The Frobenious norm of a matrix is denoted by $\left|\left| \mathbf{Y} \right|\right|_{\mathrm{F}}$, while the Hadamard product is $\circ$. 
$N(\boldsymbol{\mu}, \boldsymbol{\Sigma})$ is the multivariate normal distribution with mean $\boldsymbol{\mu} \in \real^{K}$ and covariance matrix $\boldsymbol{\Sigma} \in \real^{K\times K}$.

\spara{Signal multiscale representation via wavelet transform.}
%Similarly to \citet{d'acunto2023multiscale}, 
To analyze input data at different time resolutions we use the wavelet transform (see \citealp{percival2000wavelet} for detailed discussion).
Let us consider $S \in \nat$ individuals.
For each individual $s \in [S]$, we are given as input the fMRI data set $\data=[\signal_1, \ldots, \signal_K]^\top \in \real^{K \times N}$.
Here $\signal_i \in \real^{N}$ represents the signal corresponding to the $i$-th ROI, $K$ is the number of ROIs, and $N=M\cdot2^J,\, M \text{ and }J \in \nat$, is the length of the signals.
Furthermore, we leverage the following assumption, which is common when dealing with resting-state fMRI data~\citep{hlinka2011functional,fiecas2013quantifying,garg2013gaussian}.

\begin{assumption}\label{a1}
    The input data set $\data$ consists of i.i.d. samples $\sample \sim N(\mathbf{0}, \mathbf{\Omega}^s)$, for each $s \in [S]$.
\end{assumption}

Each column $\sample$ within the data set represents a recorded sample collected at a specific time point denoted as $n \cdot \Delta t$, where $\Delta t$ is the sampling interval.
The wavelet decomposition at level $J-1$ transforms each individual time series $\mathbf{x}_i^s$ into $J-1$ vectors of wavelet coefficients, in addition to an extra vector of scaling coefficients. 
To delve deeper into this, the vector of wavelet coefficients indexed by $j$ characterizes the fluctuations within $\mathbf{x}^s_i$ at a temporal scale of $2^{j-1} \cdot \Delta t$. 
Essentially, this vector represents the frequency band $[1/2^{j+1},1/2^j]$. 
These vectors of wavelet coefficients capture changes within the input signal across temporal scales spanning from $\Delta t$ to $2^{J-2} \cdot \Delta t$, which translates to frequencies encompassed between $1/2^J$ and $1/2$.
Conversely, the vector of scaling coefficients encapsulates information regarding variations occurring at the scale $2^{J-1}$ and coarser, related to frequencies slower than $1/2^J$.
We denote the finest scale with $j=1$ and the coarsest one with $j=J$.

Thus, by means of stationary wavelet transform (SWT,~\citealp{nason1995stationary}), the input fMRI data set $\data$ is converted to a multiscale data set $\multidata \in \real^{\bar{K} \times N}, \, \bar{K}=JK \in \nat$, where at each sample \sample corresponds its multiscale representation 
\begin{equation}\label{eq:multiscalerepresentation}
    \multisample=[\tilde{x}_1^{s,1}[n], \tilde{x}_2^{s,1}[n], \ldots, \tilde{x}_K^{s,1}[n], \ldots, \tilde{x}_1^{s,J}[n], \tilde{x}_2^{s,J}[n], \ldots, \tilde{x}_K^{s,J}[n]]^\top \in \real^{\bar{K}}.
\end{equation}
Here, $\tilde{x}_i^{s,j}[n]$ represents the wavelet coefficient at scale $j$, of the $i$-th ROI, for the individual $s$, at time $n$.
As shown in \citet{percival2000wavelet}, the wavelet transform is a linear transformation that enables the decomposition of an input signal without loss of information.
Consequently, the input signal can be perfectly reconstructed from the obtained wavelet coefficients. 
Given Assumption \ref{a1}, the following lemma holds.
\begin{restatable}{mylemma}{waveletgauss}\label{lem:waveletgauss}
    The wavelet coefficient $\tilde{x}_i^{s,j}[n]$ is distributed according to a zero-mean Gaussian distribution, for all  $s \in [S], \, j\in [J], \text{ and } n \in [N]$.
\end{restatable}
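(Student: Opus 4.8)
The plan is to exploit two facts already available in the excerpt: that the stationary wavelet transform (SWT) is a \emph{linear} transformation, and that linear images of jointly Gaussian vectors remain Gaussian. Concretely, for a fixed ROI $i$ and individual $s$, the coefficient $\tilde{x}_i^{s,j}[n]$ is produced by applying the SWT to the single time series $\mathbf{x}_i^s = [x_i^s[1], \dots, x_i^s[N]]^\top$, so it can be written as a fixed linear combination $\tilde{x}_i^{s,j}[n] = \sum_{n'=1}^{N} w_{j,n,n'}\, x_i^s[n']$ whose coefficients $w_{j,n,n'}$ depend only on the chosen wavelet filter and the scale $j$, not on the data. Because the transform is applied per ROI, it does not mix coordinates, and it therefore suffices to show that the vector $\mathbf{x}_i^s$ is zero-mean jointly Gaussian; closure under linear maps then yields the claim immediately.

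First I would establish the joint Gaussianity of $\mathbf{x}_i^s$. By \cref{a1}, the samples $\mathbf{x}^s[n] \sim N(\mathbf{0}, \mathbf{\Omega}^s)$ are i.i.d. across $n \in [N]$. Reading off the $i$-th coordinate, each scalar $x_i^s[n]$ is a marginal of a multivariate Gaussian and is therefore univariate Gaussian with mean $0$ and variance $\Omega^s_{ii}$; moreover, these scalars inherit independence across $n$ from the i.i.d. assumption. A collection of independent Gaussian variables is jointly Gaussian, so $\mathbf{x}_i^s \sim N(\mathbf{0}, \Omega^s_{ii}\,\mathbf{I}_N)$.

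Finally I would combine the two ingredients. Since $\tilde{x}_i^{s,j}[n] = \mathbf{w}_{j,n}^\top \mathbf{x}_i^s$ for the fixed weight vector $\mathbf{w}_{j,n} = [w_{j,n,1}, \dots, w_{j,n,N}]^\top$, and $\mathbf{x}_i^s$ is jointly Gaussian, the coefficient is Gaussian with mean $\mathbf{w}_{j,n}^\top \mathbf{0} = 0$ and variance $\Omega^s_{ii}\,\lVert \mathbf{w}_{j,n} \rVert_2^2$. As $s$, $j$, and $n$ were arbitrary, this proves the lemma. The only genuinely delicate point is the step from marginal Gaussianity to \emph{joint} Gaussianity: marginal normality alone would not license the linear-combination argument, but here it is secured for free by the independence built into \cref{a1}, so the remaining steps are routine.
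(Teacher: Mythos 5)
Your proof is correct and follows essentially the same route as the paper's: both express $\tilde{x}_i^{s,j}[n]$ as a fixed linear combination of the i.i.d.\ Gaussian samples $x_i^s[n']$ (the paper via the explicit recursive filter-bank equations, you by collapsing the cascade into a single weight vector $\mathbf{w}_{j,n}$) and then invoke closure of Gaussianity under linear maps. Your explicit treatment of the step from marginal to joint Gaussianity of $\mathbf{x}_i^s$ is a point the paper leaves implicit, but it is the same argument.
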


\begin{proof}\renewcommand{\jmlrQED}{}
    See Appendix \ref{app:lemma1}.
\end{proof}

\spara{Model selection via MDL.}
Our approach differs from the existing work in two ways.
First, we leverage multiscale DAGs instead of functional connectivity networks at the individual level.
Second, we do not apply averaging over the individuals' connectivity networks, rather we learn the MCB in a data-driven manner, abiding by the MDL principle.
Since we can assume to conduct the MCB discovery in a Gaussian setting (as formally stated in \Cref{sec:learningMCB}), leveraging the well-known equivalence results for Gaussian Bayesian networks by \citet{chickering2013transformational}, we can adhere to MDL principle by using the BIC score for model selection.
Specifically, consider a data set $\mathbf{D} \in R^{Q\times R}$, and a causal structure $G=(\mathcal{V}_G, \mathcal{E}_G)$ parameterized by $\boldsymbol{\theta}_{G}$, and entailing the joint distribution of the data.
By denoting the \emph{likelihood} with $\likelihood(\boldsymbol{\theta}_G \mid D)$, the maximum likelihood reads as 
\begin{equation}\label{eq:ML}
    \mathrm{ML}(D) = \max_{\boldsymbol{\theta}_G} \likelihood(\boldsymbol{\theta}_G \mid D).
\end{equation}
Hence, by using \eqref{eq:ML}, we can write the BIC score~\citep{schwarz1978estimating} as 
\begin{equation}\label{eq:BIC}
    \mathrm{BIC}(G,D) = - 2\log{\mathrm{ML}(D)} + \xi |\mathcal{E}_G|;
\end{equation}
where $\xi=\log{R}$.

% !TEX root =  ../main.tex
\section{Learning individuals multiscale DAGs}\label{sec:learningMDAGs}
Our goal requires formulating two learning problems, where the output of the first feeds as input the second.
The first problem concerns learning the multiscale DAGs of the individuals. The second, learning the MCB from these causal graphs.
Let us start with the former.

% Given $\multidata$, i.e.,  the multiscale representation of the input fMRI data for the $s$-th individual introduced in \Cref{sec:preliminaries}, we use the weighted adjacency of the multiscale linear DAG learned from $\multidata$ as the multiscale causal connectivity matrix for the $s$-th individual.

\begin{definition}[Multiscale linear DAG for the $s$-th individual]\label{def:imDAG}
    The multiscale linear DAG for the individual $s$ is a multi-layer DAG $G^s=(\vertexset, \causalset)$ with $J$ independent layers. Each layer $j \in [J]$ corresponds to the $j$-th time scale, $\vertexset=[K]$, and $\causalset=\{\mathcal{E}^{s,1} \cup \mathcal{E}^{s,2} \cup \ldots \cup \mathcal{E}^{s,J}\}$. At layer $j$, the node $l$, corresponds to the representation of the signal $\signal_l$ at time scale $j$, namely $\tilde{\mathbf{x}}_l^{s,j}$. The directed arc $e_{lm}^{s,j} \in \causalsetj$ from node $l$ to $m$ has weight $c_{lm}^{s,j} \in \real$, representing the strength of a linear causal connection from $l$ to $m$ occurring at time scale $j$. Inter-layer arcs are forbidden. 
\end{definition}

By following \citet{d'acunto2023multiscale}, given $\multidata$, i.e.,  the multiscale representation of the input fMRI data for the $s$-th individual introduced in \Cref{sec:preliminaries}, the multiscale linear DAG in Definition \ref{def:imDAG} underlies the following linear causal model: 
\begin{equation}\label{eq:multiSEM}
    \multidata = \causalstructureT \multidata + \multinoise.
\end{equation}
In \Cref{eq:multiSEM}, the weighted adjacency $\causalstructure \in \multiblock_{\bar{K}}$ is nilpotent, and we assume that $\multinoise \in \real^{\bar{K}\times N}$ is an i.i.d. multivariate Gaussian noise, $\multinoise \sim N(\mathbf{0}, \mathbf{\Sigma}^s), \, \mathbf{\Sigma}^s=\mathrm{diag}(\boldsymbol{\sigma}^{s^2}), \, \boldsymbol{\sigma}^{s^2}=[(\sigma^{s}_1)^2,\ldots,(\sigma^s_{\bar{K}})^2]$.
Notice that the block-diagonal structure of $\causalstructure$ complies with the independence among layers.

\begin{problem}
    For each individual $s \in [S]$, the matrix $\causalstructure$ in \Cref{eq:multiSEM} is learned by solving the following continuous optimization problem:
    \begin{equation}
    \begin{aligned}
        \min_{\estimatedcausalstructure \in \multiblock} &\quad  \frac{1}{2}\left|\left|\multidata-\estimatedcausalstructureT \multidata\right|\right|_{\mathrm{F}}^2 + \lambda \left|\left|\estimatedcausalstructure\right|\right|_{1}\, ,\\
       \mathrm{subject \,to}&  \quad  h(\estimatedcausalstructure)=\mathrm{Tr}\left(e^{\estimatedcausalstructure \circ \estimatedcausalstructure}\right)-\bar{K} = 0;
    \end{aligned}
    \tag{P1}\label{eq:MSCASTLE}
\end{equation}
    where $\lambda \in \real$ is a tunable parameter that promotes sparse solutions.
\end{problem}

Problem~\eqref{eq:MSCASTLE} is non-convex due to the acyclicity constraint introduced by \citet{zheng2018dags}.
To solve it, we resort to the gradient-based linear method MS-CASTLE~\citep{d'acunto2023multiscale}.
MS-CASTLE solves Problem~\eqref{eq:MSCASTLE} by introducing a linearization of the acyclicity constraint, and then by leveraging the alternating direction method of multipliers (ADMM, \citealp{boyd2011distributed}).
MS-CASTLE has been empirically proven to be robust to \emph{(i)} different choices of the orthogonal wavelet family employed to obtain the multiscale representation in \eqref{eq:multiscalerepresentation}, and \emph{(ii)} different distributions of the noise $\multinoise$ in \eqref{eq:multiSEM}.
The multiscale weighted adjacencies $\{\estimatedcausalstructure\}, \, s \in [S]$, induce the multiscale linear DAGs $\{\estimatedindividualcausalgraph\}$, with $\estimatedindividualcausalgraph=(\vertexset, \estimatedindividualcausalset)$.

% !TEX root =  ../main.tex
\section{Learning the multiscale causal backbone}\label{sec:learningMCB}
We can now move to our second challenge: learning the MCB.
% Starting from $\{\estimatedcausalstructure\}$, let us consider the corresponding unweighted adjacency $\{\estimatedadjacency \in \{0,1\}^{\bar{K \times \bar{K}}}\}$, such that $\hat{b}_{lm}^{s,j}=1$ iff $\hat{c}_{lm}^{s,j}\neq0$, and $0$ otherwise.
To formally define the MCB, we need to introduce the concepts of \emph{(i)} $p$-persistent, directed, and unweighted arc $u_{lm}^j$; \emph{(ii)} candidate universe $\candidateuniverse$; and \emph{(iii)} set of idiosyncratic causal arcs $\idiosyncraticset$.

\begin{definition}[$p$-persistent arc]\label{def:ppersistentArc}
    The directed, unweighted arc $u_{lm}^j$ is $p$-persistent iff $\widehat{\mathbf{C}}^{s} \ni \hat{c}_{lm}^{s,j}\neq0 \\\; \forall s \in \mathcal{P}$, such that $\mathcal{P} \subseteq [S]$ and $\mid \mathcal{P}\mid>p$.    
\end{definition}
% \mynote[from=gdfm]{This definition is ambiguous. I think it should read as follows instead.
% \begin{definition}[$p$-persistent arc]\label{def:ppersistentArc}
%     The directed, and unweighted arc $u_{lm}^j$ is $p$-persistent iff $\lvert \{ P \mid \forall s \in P \subseteq [S], \hat{c}_{lm}^{s,j} \in \widehat{\mathbf{C}}^{s} \neq 0 \} \rvert \geq p$.
% \end{definition}
% Also, wouldn't it be more elegant if $p$ was a fraction w.r.t. S?
% }
Informally, we say an arc is $p$-persistent if it appears (i.e., its weight in the $j$-th layer is non-zero) in more than $p$ of the learned weighted adjacency matrices.

\begin{definition}[Candidate universe]\label{def:candidateU}
    Given an integer $p$, the candidate universe $\candidateuniverse$ is defined as
    \begin{equation}\label{eq:candidateU}
        \candidateuniverse \coloneqq \{u_{lm}^j \, \mid \, u_{lm}^j \text{ is $p$-persistent}, j \in [J], \text{ and } l,m \in \vertexset\}.
    \end{equation}
\end{definition}
That is, the candidate universe comprises the set of all $p$-persistent arcs.

\begin{definition}[Set of idiosyncratic causal arcs]\label{def:idiosyncraticP}
    Given the multiscale linear DAG $\estimatedindividualcausalgraph=(\vertexset, \estimatedindividualcausalset)$ for the $s$-th individual, the set of idiosyncratic causal arcs is 
    \begin{equation}\label{eq:idiosyncraticP}
        \idiosyncraticset \coloneqq \estimatedindividualcausalset \setminus \candidateuniverse.
    \end{equation}
\end{definition}
% \mynote[from=gdfm]{Is this true? What happens to the $p$-persistent arcs that are not selected to be in the backbone?}

The sets $\{\idiosyncraticset\}$ are instrumental in isolating the information within the data that we want to explain only by the arcs in $\candidateuniverse$, adhering to MDL.
We are now ready to formally define the MCB.

% \begin{definition}[Multiscale causal backbone, MCB]\label{def:mcb}
%     The multiscale causal backbone shared by $S$ individuals is an unweighted multiscale linear DAG $A_p=(\vertexset, \backboneset)$, where $\backboneset \subseteq \candidateuniverse$, and the presence of an arc $a_{lm}^j \in \backboneset$ complies with the MDL principle. 
% \end{definition}
% \mynote[from=gdfm]{I would put the desiderata and constraints that the MCB has in the definition. Otherwise it feels quite hollow.}
\begin{definition}[Multiscale causal backbone, MCB]\label{def:mcb}
Given a collection of individual multiscale causal DAGs $\{\estimatedindividualcausalgraph\}$ and a subset of $p$-persistent arcs $\candidateuniverse \subseteq \{\estimatedcausalset^1 \cup \estimatedcausalset^2 \cup \ldots \cup \estimatedcausalset^S\}$, the multiscale causal backbone MCB is the unweighted, multiscale, linear DAG $(\vertexset, \backboneset), \, \backboneset \subseteq \candidateuniverse$, that minimizes the description length of the collection of individual multiscale DAGs $\{\estimatedindividualcausalgraph\}$ when represented as the union of the MCB and individual (i.e., p-persistent \& idiosyncratic) arcs.

%The multiscale causal backbone shared by $S$ individuals is the most concise unweighted multiscale linear DAG $A_p=(\vertexset, \backboneset)$ adhering to the MDL principle, where $\backboneset \subseteq \candidateuniverse$. It explains the information in the data that cannot be accounted for by individual multiscale DAGs containing non-$p$-persistent idiosyncratic arcs.
\end{definition}

% According to Definition \eqref{def:mcb}, the MCB represents a shared causal structure that might contain also causal arcs that are not present in all the estimated multiscale linear DAGs $\{\estimatedindividualcausalgraph\}$.
% This choice is conscious, since only considering the case $\mathcal{P}=[S]$ in Definition \eqref{def:ppersistentArc} is too stringent and ignores estimation errors in $\{\estimatedindividualcausalgraph\}$, which might result in the exclusion of relevant arcs.

% According to Definition \eqref{def:mcb}, the MCB represents the most concise causal structure shared by a sample of individuals, explaining the information in the data that cannot be accounted for by individual multiscale DAGs containing non-$p$-persistent idiosyncratic arcs, as defined in Definition \eqref{def:idiosyncraticP}. 
According to Definition~\eqref{def:mcb}, the MCB strictly relates to the value of $p$.
Specifically, different choices for $p$ are tied to different assumptions in the generative model.
Choosing $p=0$ implies assuming that the fMRI data for the individuals are generated by a single common multiscale DAG. 
This is a very strong assumption, experimentally opposed by the evidence of causal fingerprinting provided in \Cref{sec:results}.
Setting $p=S-1$ means assuming that the individual multiscale DAGs are composed of the union of the idiosyncratic arcs $\{\idiosyncraticset\}$ and a set of shared backbone arcs \backboneset. 
Moreover, it potentially considers idiosyncratic arcs that occur in $S-1$ individual multiscale DAGs. 
In reality, we expect to have idiosyncratic arcs that occur in fewer individual DAGs. 
Finally, it assumes that MS-CASTLE has maximum recall in the learning task for each subject $s$. 
Clearly, this hypothesis is also unrealistic given the difficulty of the causal structure learning task from real-world data.
A third, more reasonable choice is $p \in (0,S-1)$. 
In this case, the assumption of perfect discovery by MS-CASTLE is relaxed, which allows the MCB to contain causal arcs that are not present in all DAGs $\{\estimatedindividualcausalgraph\}$. 
In addition, idiosyncrasy can be defined at a level $p<S-1$.
Of course, here the value of $p$ represents a trade-off between the dissimilarity of the idiosyncratic components along the subjects and the potential size of the MCB.
Given that $p$ is a hyperparameter of the algorithm, tuning it automatically from data would be desirable.
However, as shown in \Cref{app:cost}, the computational cost of the score-based search procedure we employ is quite high for low values of $p$.
We plan to develop a data-driven procedure for fine-tuning $p$ in future work.

\begin{problem}
    Given \Cref{lem:waveletgauss} and the Gaussianity assumption of $\multinoise$ in \Cref{eq:multiSEM}, considering as input data $\{\multidata\}$, the MCB in Definition~\eqref{def:mcb} can be learned by solving the following problem:
    \begin{equation}
    \min_{\estimatedbackboneset \, \subseteq \, \candidateuniverse} \mathrm{BIC}(\{\candidatecausalgraph\}, \{\multidata\}).
    \tag{P2}\label{eq:learningproblem}
    \end{equation}
    Here $\candidatecausalgraph=(\vertexset, \estimatedbackboneset \cup \idiosyncraticset)$, $s\in [S]$, is the multiscale causal structure for individual $s$, whose arc set consists of the union of the arcs in the learned MCB and those in the idiosyncratic set $\idiosyncraticset$ in Definition~\eqref{def:idiosyncraticP}.
\end{problem}

% To transform Definition~\ref{def:mcb} into a proper learning problem for discovering the MCB that abides by the MDL principle, we use the BIC score in \Cref{eq:BIC}. 
% Specifically, our choice is justified by \Cref{lem:waveletgauss} and the Gaussianity assumption of $\multinoise$ in \Cref{eq:multiSEM}. 
In Problem~\eqref{eq:learningproblem} the BIC score operationalizes the MDL, given the Gaussianity of our setting (cf. \Cref{sec:preliminaries}).
Interestingly, in Problem~\eqref{eq:learningproblem}, the weights of the arcs in $\candidatecausalgraph$ vary over $s \in [S]$. % which also belong to the learned MCB $\widehat{A}_p$
This property reflects the spirit of our work, which is to learn an MCB while preserving features at the individual level.

To solve Problem~\eqref{eq:learningproblem}, we propose a score-based causal structure learning method.
Let us indicate with $\candidatecausalstructure \in \multiblock_{\bar{K}}$ the ML estimate for causal coefficients corresponding to $\candidatecausalgraph$, where the $K$ by $K$ block associated with the $j$-th scale is $\widehat{\mathbf{B}}^{s,j}=[\hat{\mathbf{b}}_1^{s,j}, \ldots, \hat{\mathbf{b}}_K^{s,j}]$.
Here, $\hat{\mathbf{b}}_k^{s,j}$ are the weights of the incoming causal arcs for the node $k$, at scale $j$ and for individual $s$.

Hence, given \emph{(i)} the Gaussianity of the setting, \emph{(ii)} the independence among the $s$ individuals, and \emph{(iii)} the independence of the time scales in the $\{G^s\}$, the log-likelihood evaluated at the ML estimates $(\candidatecausalstructure,\estimatedvariance)$, $s \in [S]$,  reads as
\begin{equation}\label{eq:loglike}
    \ell(\{\candidatecausalstructure\},\! \{\estimatedvariance\} \mid \{\multidata\})\! = \!\sum_s^S \! \sum_j^J \! \sum_k^K \! \left[ -\frac{N}{2} \!\log{\left( \hat{\sigma}_{k}^{s,j}\right)^2} \!-\! \frac{1}{\left( \hat{\sigma}_{k}^{s,j}\right)^2} \! \left|\left| \multisignal \!-\! \hat{\mathbf{b}}_k^{s,j} \widetilde{\mathbf{X}}^{s,j} \right|\right|_2^2 \right].
\end{equation}

Then, given \Cref{eq:BIC,eq:loglike}, the score to optimize for our case is
\begin{equation}\label{eq:score}
    \mathrm{score}(\{\candidatecausalgraph\},\{\multidata\}) = -2 \log \ell(\{\candidatecausalstructure\},\! \{\estimatedvariance\} \mid \{\multidata\}) + \xi \sum_{s\in [S]} |\estimatedbackboneset \cup \idiosyncraticset|.
\end{equation}

\begin{remark}
    If $\xi=\log{N}$ the score in \Cref{eq:score} corresponds to the BIC. If $\xi= 2 \log{K}$, the score corresponds to the risk inflation criterion (RIC,~\citealp{foster1994risk}).
    RIC is a penalized version of the BIC for multiple regression, suitable when the number of nodes is large, i.e., $K>\sqrt{N}$.
    Since in our setting  $K>\sqrt{N}$ (cf., \Cref{sec:results}), we use the RIC score.
\end{remark}

\textcolor{black}{
The score in \Cref{eq:score} is \emph{consistent} (cf. Definition 18.1 and Theorem 18.2 in \citealp{koller2009probabilistic}). 
Therefore, as the amount of data increases, the true underlying structure minimizes the score. 
}
Given the form of $\ell(\{\candidatecausalstructure\},\! \{\estimatedvariance\} \mid \{\multidata\})$ in \Cref{eq:loglike}, according to Proposition 18.2 in \citet{koller2009probabilistic}, the score in \Cref{eq:score} is \emph{decomposable} since it can be written as the sum over $s$ and $j$ of family scores corresponding to each node $k \in \vertexset$, mathematically,
\begin{equation}\label{eq:familyscore}
    \mathrm{RIC}_k(\{\candidatecausalgraph\},\{\widetilde{\mathbf{X}}^{s,j}\}) = -\frac{N}{2} \!\log{\left( \hat{\sigma}_{k}^{s,j}\right)^2} \!-\! \frac{1}{\left( \hat{\sigma}_{k}^{s,j}\right)^2} \! \left|\left| \multisignal \!-\! \hat{\mathbf{b}}_k^{s,j} \widetilde{\mathbf{X}}^{s,j} \right|\right|_2^2 + \xi |\widehat{\mathcal{A}}_{p,k}^j \cup \mathcal{I}_{p,k}^{s,j}|.
\end{equation}
Here, $\widehat{\mathcal{A}}_{p,k}^j \subset \estimatedbackboneset$ and $\mathcal{I}_{p,k}^{s,j} \subset \idiosyncraticset$ are the subsets of incoming arcs in $k$ at scale $j$.
Intuitively, $\mathrm{RIC}_k(\{\candidatecausalgraph\},\{\widetilde{\mathbf{X}}^{s,j}\})$ measures how well a set of nodes perform as parents for the child node $k$, for subject $s$ and at scale $j$, while penalizing larger parents sets. 

\textcolor{black}{
Together, the consistency and decomposability properties imply \emph{local consistency} of the score (cf. Definition 6 and Lemma 7 in \citealp{chickering2002optimal}).
This property in turn guarantees asymptotically that an arc absent from the MCB does not minimize the score in \Cref{eq:score}. 
}

\textcolor{black}{Given that our method is iterative, let us denote with $\widehat{\mathcal{A}}_{p,t}=\{\widehat{\mathcal{A}}_{p,t}^1 \cup \widehat{\mathcal{A}}_{p,t}^2\cup \ldots \cup \widehat{\mathcal{A}}_{p,t}^J\}$ the solution at iteration $t$, with $\widehat{\mathcal{A}}_{p,t}^j$ being the set of arcs for the scale $j$.
Let us consider that the last added $p$-persistent arc for the scale $j$ is $q \rightarrow r$, namely $u_{qr}^j$.
Let us indicate with $\candidateuniverse^j\subseteq \candidateuniverse$ the subset of $p$-persistent arcs at scale $j$.
The decomposability of the score in \Cref{eq:score} implies that at scale $j$, before evaluating the addition of another $p$-persistent arc from $\candidateuniverse^j$, we need to update only the family score for the child node $r$ given in \Cref{eq:familyscore}, i.e., $\mathrm{RIC}_r(\{\candidatecausalgraph\},\{\widetilde{\mathbf{X}}^{s,j}\})$, for each $s \in [S]$.
Accordingly, we only consider the $p$-persistent arcs $u_{l r}^j \in \candidateuniverse^j$ ending in $r$, where $\candidateuniverse^j$ no longer contains previously evaluated arcs.
}

\textcolor{black}{Specifically, for each of these $p$-persistent arcs, we compare the models
$$ \mathrm{M}_0: \, \widehat{\mathcal{A}}^j_{p,t+1}=\widehat{\mathcal{A}}^j_{p,t}, \quad \text{and} \quad \mathrm{M}_1: \, \widehat{\mathcal{A}}^j_{p,t+1}=\widehat{\mathcal{A}}^j_{p,t} \cup \{u_{l r}^j\},$$
by means of the following linear regression model
\begin{equation}\label{eq:regression}
    \multisignal_r = b_{l r}^{s,j} \multisignal_{l} + \sum_{y \in \mathcal{P}^{s,j}_r} b_{yr}^{s,j} \multisignal_{y} + \mathbf{z}^{s,j}_r, \quad \forall s \in [S],
\end{equation}
where $\mathcal{P}^{s,j}_r$ is the current parent set of the node $r$, at scale $j$ for individual $s$.
Hence, we compute the least-squares estimates for the regression coefficients, for all the $S$ individuals in a parallel fashion.
$\mathrm{RIC}^{\mathrm{M}_0}$ corresponds to the case in which the log-likelihoods of all the individuals are evaluated at the least-squares estimates with $b_{l r}^{s,j}=0$, whereas $\mathrm{RIC}^{\mathrm{M}_1}$ to the case in which $b_{l r}^{s,j}\neq0$.
At this point, each $p$-persistent arc $u_{l r}^j$ ending in $r$ is associated with an updated $\Delta_{\mathrm{RIC}}^j=\mathrm{RIC}^{\mathrm{M}_1}-\mathrm{RIC}^{\mathrm{M}_0}=\sum_{s\in [S]} \Delta_{\mathrm{RIC}}^{s,j}$.
Thus, $\Delta_{\mathrm{RIC}}^j$ is the sum of contributions coming from all the $S$ individuals, each possibly having distinct $\hat{\mathbf{b}}_r^{s,j}$.
}

\textcolor{black}{Subsequently, we add to $\estimatedbackboneset^j$ the $p$-persistent arc $\candidateedge^j \in \candidateuniverse^j$ associated with the lowest $\Delta_{\mathrm{RIC}}^j<0$, while not inducing cycles in the solution $\widehat{A}_p$.
The procedure ends when none of the remaining $p$-persistent arcs reduces the RIC if added to $\estimatedbackboneset^j$.
Due to independence, the search procedure is parallelizable over the time scales $j \in [J]$ and the individuals $s \in [S]$, with further benefits to the computational cost. 
The overall procedure is given in \Cref{alg:search}.
}

\begin{algorithm}[ht!]
\caption{MCB score-based search procedure}\label{alg:search}
\begin{algorithmic}
\footnotesize
\State {\bfseries Input: $\candidateuniverse$, $\idiosyncraticset$} 
\State {\bfseries Output: $\estimatedbackboneset$} 
\State {\bfseries Initialize: $\estimatedbackboneset \gets \emptyset$}
\State {\bfseries do in parallel over $j \in [J]$}
    \State {\quad \bfseries do in parallel over $s \in [S]$}
        \State {\quad \quad \bfseries do in parallel over $k \in \vertexset$}
            \State {\quad \quad \quad Compute $\mathrm{RIC}_k(\{\candidatecausalgraph\},\{\widetilde{\mathbf{X}}^{s,j}\})$} in \Cref{eq:familyscore} \Comment{At the beginning $\candidatecausalgraph=(\vertexset, \idiosyncraticset)$}
% \State {\bfseries do in parallel over $j \in [J]$}
    \State {\quad \bfseries while $\exists \, \candidateedge^j \in \candidateuniverse^j \, \mid \, \Delta_{\mathrm{RIC}}^j<0$ do:}
        \State {\quad \quad Select $\candidateedge^j$ for which $\Delta_{\mathrm{RIC}}^j$ is mimimum}
        \State{\quad \quad \textbf{if } $\candidateedge^j$ does not induce cycles \textbf{then:}}
            \State {\quad \quad \quad $\estimatedbackboneset^j=\estimatedbackboneset^j \cup \{\candidateedge^j\}$} \Comment{Add the arc to the MCB}
            \State {\quad \quad \quad \bfseries do in parallel over $s \in [S]$}
            \State {\quad \quad \quad \quad Update $\mathrm{RIC}_m(\{\candidatecausalgraph\},\{\widetilde{\mathbf{X}}^{s,j}\})$ in \Cref{eq:familyscore} corresponding to the child node $m$}
        \State{\quad \quad \bfseries end if}
        \State {\quad \quad $\candidateuniverse^j=\candidateuniverse^j \setminus \{\candidateedge^j\}$} \Comment{Remove the evaluated arc}
    \State {\quad \bfseries end while}
\State{$\estimatedbackboneset=\bigcup\limits_{j \in [J]} \estimatedbackboneset^j$}
\end{algorithmic}
\end{algorithm}
\vspace{-\baselineskip}

% !TEX root =  ../main.tex
\section{Empirical assessment on synthetic data}\label{sec:synth_data}

In this section, we present the empirical assessment of the proposed method on synthetic data.
In this way, we can effectively evaluate the performance of the proposed methodology since we exactly know the MCB, along with the set of idiosyncratic arcs of individuals, generating the observed data.
In detail, we use the following data-generating process.

\spara{Data generation.}
We generate $50$ data sets, each consisting of $S=100$ individuals.
Each individual is described by $K=10$ ROIs, each with a time series of length $N=1200$ determined by an underlying causal structure that is the union of a shared causal backbone and the individual idiosyncratic set.
% We consider $(S,K,N)=(10,100,1200)$, and w.l.o.g., $J=1$.
Furthermore, w.l.o.g., we consider the case $J=1$.
Indeed, since the time scales are independent, the performance of \Cref{alg:search} is not affected by the number of time scales.
We generate a strictly lower triangular masking matrix $\backbonemask \in \{0,1\}^{K \times K}$, where $[\backbonemask]_{ij} \sim B(0.25)$ and $B$ is the Bernoulli distribution. 
The non-zero entries of this matrix correspond to the arcs in the causal backbone.
Similarly, we generate strictly lower triangular masking matrices $\idiosyncraticmask \in \{0,1\}^{K \times K}$, where $[\idiosyncraticmask]_{ij} \sim B(0.5)$, with $s \in [S]$.
The non-zero entries of these matrices, which are not in the backbone, represent the idiosyncratic connection for the individuals.
Hence, $\summask = \backbonemask + \idiosyncraticmask$ is the adiaciency matrix corresponding to $G^{s}$ for individual $s$.
At this point, we sample $\weights \in \real^{K \times K}$ from a uniform $U(-1,1)$, thus obtaining the causal matrix for each individual $\causalstructure = \summask \circ \weights, \, \forall s \in [S]$, representing the weights of the arcs of $G^{s}$.
Finally, we generate data according to \Cref{eq:multiSEM}.
As mentioned above, we build $50$ data sets according to this procedure.
% Specifically, we generate $50$ data sets, each consisting of $S=100$ individuals.
% Each individual is described by $K=10$ ROIs, each with a time series of length $N=1200$ determined by an underlying causal structure that is the union of a shared causal backbone and the individual idiosyncratic set.
% \Cref{subapp:addresults1} describes the generating process more in detail.

\spara{Results.} 
We compare the proposed methodology to baselines based on widely used measures of functional connectivity, in terms of F1 score and structural Hamming similarity (SHS).
\Cref{app:metrics} describes the metrics used.
In particular, among the connectivity measures outlined in \Cref{sec:introduction}, we consider \emph{(i)} directed transfer function (DTF) and partial directed coherence (PDC) among the directed measures, and \emph{(ii)} Pearson's correlation, partial correlation, and mutual information among the undirected ones.
\Cref{subapp:addresults1} provides details on the application of the baselines.

\begin{figure}[t]
    \centering
    \includegraphics[width=\textwidth]{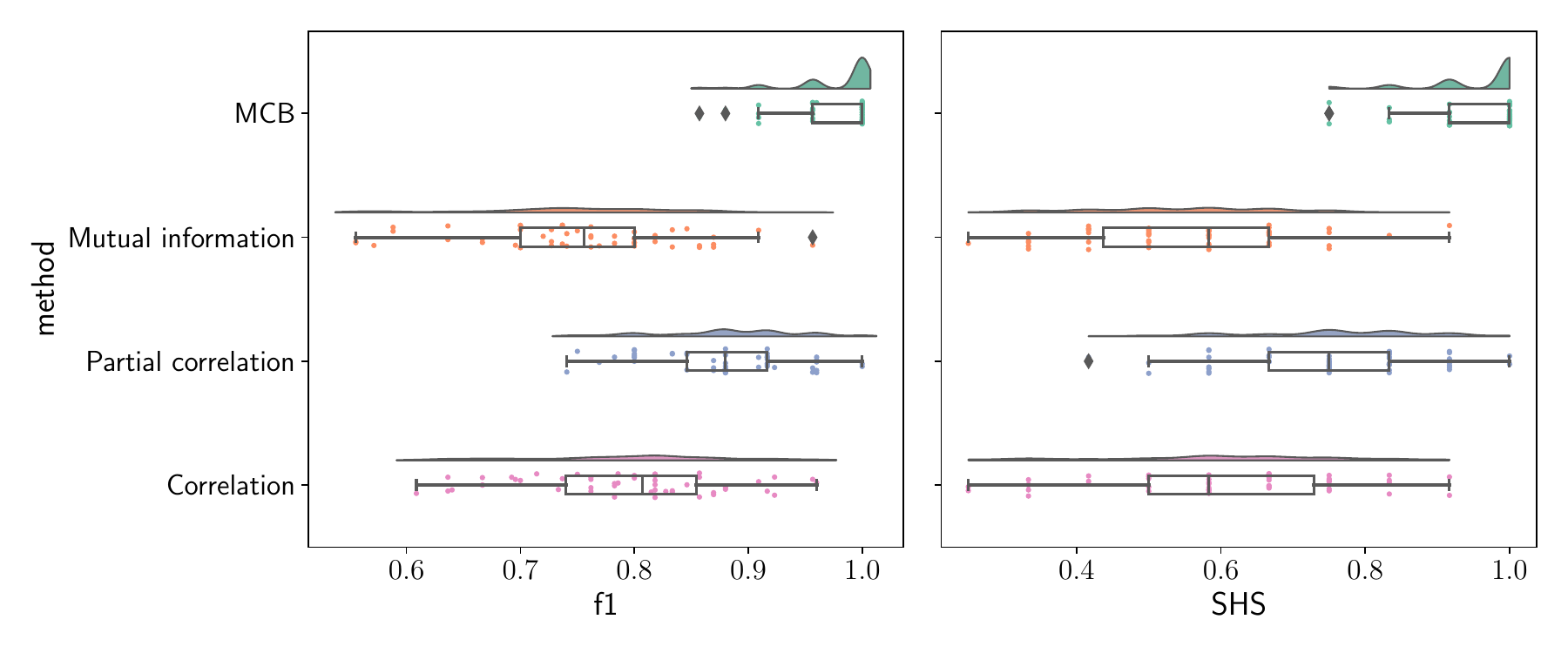}
    \caption{Raincloud plots of the distributions of (left) F1 score and (right) SHS, obtained by the tested methods over $50$ synthetic data sets. For readability, we omit the results by DTF and PDC.}
    \label{fig:synthreadable}
\end{figure}

% \spara{Results.}
\Cref{fig:synthreadable} depicts the results obtained by the considered methods over the $50$ synthetic data sets.
Here, we omit DTF and PDC for readability reasons, as their performance is quite low (\Cref{subapp:addresults1} shows the complete figure).
Our methodology outperforms the baselines on both metrics.
Specifically, the kernel density estimate (KDE) corresponding to our method is highly concentrated around the maximum value. 
Conversely, the KDEs associated with mutual information and Pearson's correlation exhibit high dispersion. 
The results obtained from these two methods are statistically equivalent, as expected. 
Finally, the second most effective method is partial correlation. In a linear setting, this method benefits from information regarding conditional dependence.
% !TEX root =  ../main.tex
\section{Analysis of resting-state fMRI data}\label{sec:results}
We study the data set introduced by \citet{termenon2016reliability}, which encompasses a large sample of rs-fMRI data, publicly released as part of the HCP~\citep{smith2013resting}.
The data set comprises rs-fMRI recordings from $100$ healthy adults, with each subject undergoing two rs-fMRI acquisitions on separate days. 
During the acquisition, subjects were instructed to lie with their eyes open, maintaining a relaxed fixation on a white cross against a dark background. 
They were asked to keep their minds wandering and remain awake throughout the session.

Accordingly, we have $S = 100$ individual data sets, for two separate days.
The parcellation scheme divides the brain into $89$ ROIs. 
We separately analyze the ROIs of the left hemisphere and those of the right hemisphere. 
This approach prevents signal averaging across brain regions corresponding to different hemispheres. 
Consequently, each hemisphere contains $K = 45$ ROIs, with the vermis region being common to both.
Additionally, each acquisition session has a duration of $14$ minutes and $24$ seconds, which results in a total of $N = 1200$ timestamps. 
For further details regarding the data source, we refer the interested reader to \citet{termenon2016reliability}.

\spara{Analysis of the MCBs.}
We apply our methodology and the baseline methods considered in \Cref{sec:synth_data} to this dataset. 
To retrieve the brain connectivity backbone according to the baselines, we use the same approach as in \Cref{sec:synth_data}, which is detailed in \Cref{subapp:addresults1}. 
For the multiscale decomposition performed by MS-CASTLE, we use the SWT with Daubechies wavelets with filter length equal to $10$, consistently with the approach used by \citet{termenon2016reliability}.
The set of $p$-persistent arcs \candidateuniverse is constructed by selecting arcs that are present in more than $p=80$ individuals on each day of the acquisition.
This criterion is also used within the application of the baseline methods to decide whether an arc has to be added to the connectivity backbone. 
Furthermore, we set the threshold parameter to $\tau=0.2$, for all the methods, after having explored the effect of $\tau$ on the cardinality of the arc sets of the individual multiscale DAGs (see \Cref{subapp:addresults2}).
Finally, we evaluate the statistical significance of the arcs within the MCB retrieved by our method via a bootstrap with resampling.
More in detail, starting from the $200$ multiscale DAGs learned by MS-CASTLE, we create $100$ samples, where each sample consists of $200$ multiscale DAGs obtained using bootstrap. 
At this point, we retrieve the MCB corresponding to each sample, and we retain the arcs that are associated with weights statistically different from zero at $10\%$ level within the $\{\candidatecausalgraph\}$.

Baseline methods retrieve dense networks for both the right and left hemispheres, and it is challenging to draw any conclusion from the results.
Due to space constraints, the corresponding connectivity backbones are given in \Cref{subapp:addresults3}. 

Conversely, our methodology produces sparse MCBs for both hemispheres, provided in \Cref{fig:MCBsfull} in \Cref{subapp:addresults3} for space reasons.
Our findings suggest that finer scales are characterized by more complex structures.
Here ROIs associated with different functionalities interact.
Conversely, at coarser scales causal interactions mainly occur between ROIs associated with the same functionalities.
Furthermore, as expected nodes related to the processing of external stimuli (i.e., visual, auditory, speech, and language) appear and interact at the first three scales.
Conversely, nodes associated with higher-level cognitive functions and emotions appear throughout the considered scales, as expected in the resting state.

% Specifically, referring to the networks provided in \Cref{fig:MCBsfull}, our findings show that:
Specifically, our findings are the following:

\begin{squishlisttight}
    \item The occipital lobe consistently plays a central role within the visual network (VN) across various frequency scales. This suggests its fundamental role in visual processing \citep{utevsky2014precuneus,cunningham2017structural}.
    \item The superior parietal gyrus, crucial for sensory integration and attention, has widespread impacts. It influences the precuneus in the default mode network (DMN), indicating a link between sensory integration and self-awareness \citep{freedman2018integrative,tamietto2015once}.
    \item The postcentral gyrus, primarily involved in somatosensory information processing, is a key node. At finer scales, it connects to the precentral gyrus, which governs voluntary motor movements, and further activates the middle frontal gyrus, involved in high-level cognitive functions \citep{johansen2002attention,brown2001motor}. 
    \item Nodes in the frontal regions (e.g., orbital surface of the inferior frontal gyrus, lateral surface of the superior frontal gyrus) have significant impacts. They govern emotion processing, cognitive flexibility, decision-making, and cognitive control \citep{boisgueheneuc2006functions,rolls2000orbitofrontal}.
    \item The middle temporal gyrus, involved in various functions like language, memory, and cognition \citep{davey2016exploring}, influences other temporal regions, including the superior temporal gyrus and the temporal pole \citep{allison2000social,diveica2021establishing}. Additionally, several works in neuroscience point out a link between abnormalities in these brain regions and the autism disorder spectrum (see \citealp{jou2010enlarged,sato2017reduced} and references therein).
    \item The nature of brain connectivity is dynamic, with certain nodes serving as common causes across different networks and frequency scales. The multiscale causal analysis points out the presence of bi-directional connections (e.g., between the precentral gyrus and the postcentral gyrus), and effectively locates the occurrence of the change of direction of the arcs in the frequency domain.
    Additionally, it suggests two regimes in causal interactions: (i) at higher frequencies, sensory information is a key driver in the network \citep{mantini2007electrophysiological,smith2009correspondence}, together with the high-level cognitive function of the lateral surface of the superior frontal gyrus; (ii) at lower ones the shared brain activity is driven by the nodes of the DMN, specifically the lateral surface of the superior frontal gyrus and the precuneus, which contribute to processes related to self-awareness, introspection, and self-referential thinking \citep{raichle2001default,uddin2008network}.
\end{squishlisttight}
The detailed analysis organized per scales is given in \Cref{subapp:addresults3}.

\spara{Comparison with the SCBs.}
An interesting point concerns the comparison between the MCBs and the SCBs that can be obtained by the application of the backbone search procedure (equipped with bootstrap with resampling) to subject-specific single-scale causal graphs.

% \begin{figure}[htb]
%     \hspace{15mm}
%     \centering
%     \begin{minipage}[b]{.35\textwidth}
%         \centering
%          \includegraphics[width=1.0\textwidth]{figs/results/1.3_single_scale_left.pdf}
%          \subcaption{Left}
%          \label{subfig:ssleft}
%          \vspace{3mm}
%      \end{minipage}
%      \vspace{-2mm}
%      \hfill
%      \begin{minipage}[b]{.35\textwidth}
%         \centering
%          \includegraphics[width=1.0\textwidth]{figs/results/1.3_single_scale_right.pdf}
%          \subcaption{Right}
%          \vspace{3mm}
%          \label{subfig:ssright}
%      \end{minipage}
%      \hspace{15mm}
%      \vspace{-2mm}
%     \caption{SCBs for (\subref{subfig:ssleft}) the left and (\subref{subfig:ssright}) the right hemispheres. Color-filled nodes are the main drivers within the MCBs in \Cref{fig:MCBs}.}
%     \label{fig:SCBs}
% \end{figure}

% \Cref{fig:SCBs} shows the SCBs for both hemispheres retrieved by applying the single-scale counterpart of MS-CASTLE, i.e., SS-CASTLE~\citep{d'acunto2023multiscale}. 
The SCBs provided in \Cref{fig:SCBs} in \Cref{subapp:addresults4} confirm the importance of the regions highlighted by the multiscale analysis, which also appear to be the main drivers in this analysis. 
However, while the MCBs allow us to study causal structures in distinct frequency bands, thus enabling us to locate the predominance of certain brain activities over others in the frequency domain and detect structural changes, in the SCBs the information is aggregated. 
As an example, consider the causal connection from the superior parietal gyrus (node $26$) to the precuneus (node $30$) in \Cref{subfig:ssleft}.
Looking at the MCBs in \Cref{subfig:left1,subfig:left2,subfig:left3,subfig:left4,subfig:left5}, we see that this connection only occurs at the first three time scales, i.e., frequencies greater than $0.087$ Hz. 
Moreover, the aggregation of information within the SCBs also leads to the loss of certain connections, such as the bidirectional one between the precentral gyrus (node $0$) and the postcentral gyrus (node $25$).
These regions belong to the sensory-motor network \citep{mantini2007electrophysiological,smith2009correspondence}, and the bi-directionality of this causal relation confirms that they collaborate to control and sense movements.

\spara{Causal fingerprinting.}
Finally, an intriguing point concerns the individual variability of multiscale causal structures, that we name \emph{causal fingerprinting}.
This concept relates to the well-studied functional connectivity fingerprinting, i.e., the usefulness of functional connectivity in identifying subjects within large groups~\citep{miranda2014connectotyping,finn2015functional,liu2018chronnectome,elliott2019general}.
% Here, we test three different hypotheses, whose statistical significance has been assessed by the two-sample Cram\'{e}r-von Mises \citep{anderson1962distribution} tests at $0.1\%$ level, after applying the Bonferroni correction~\citep{bonferroni1936teoria}.
% Let us consider $G^i_{h,t}$ the MDAG corresponding to the hemisphere $h$ of subject $i$ at acquisition $t$, where $h \in \{\mathrm{left}, \mathrm{right}\}$, $t \in \{t_1,t_2\}$, and $i \in [S]$.
Let us indicate with $G^i_{h,t}$ the individual multiscale DAG learned by MS-CASTLE, and corresponding to the hemisphere $h$ of subject $i$ at acquisition $t$, where $h \in \{\mathrm{left}, \mathrm{right}\}$, $t \in \{t_1,t_2\}$, and $i \in [S]$.
The tested hypothesis is that the similarity between \(G^i_{h,t_1}\) and \(G^i_{h,t_2}\) is statistically different from the similarity between \(G^i_{h,t_1}\) and \(G^j_{h,t_1}\) for \(j \in [S-1]\).
Mathematically, this can be expressed as \(\mathrm{H}_0: \mathrm{J}(G^i_{h,t_1}, G^i_{h,t_2}) = \mathrm{J}(G^i_{h,t_1}, G^j_{h,t_1})\), where \(\mathrm{J}\) represents the Jaccard score used for measuring the similarity between individual multiscale DAGs \citep{jaccard1912distribution}.
Statistical significance is assessed by the two-sample Cram\'{e}r-von Mises \citep{anderson1962distribution} test at $0.1\%$ level, after applying the Bonferroni correction~\citep{bonferroni1936teoria}.

\begin{figure}[t]
\vspace{-.5\baselineskip}
\begin{tabular}{@{} c p{.4\textwidth} @{}}
    \includegraphics[scale=0.33,valign=T]{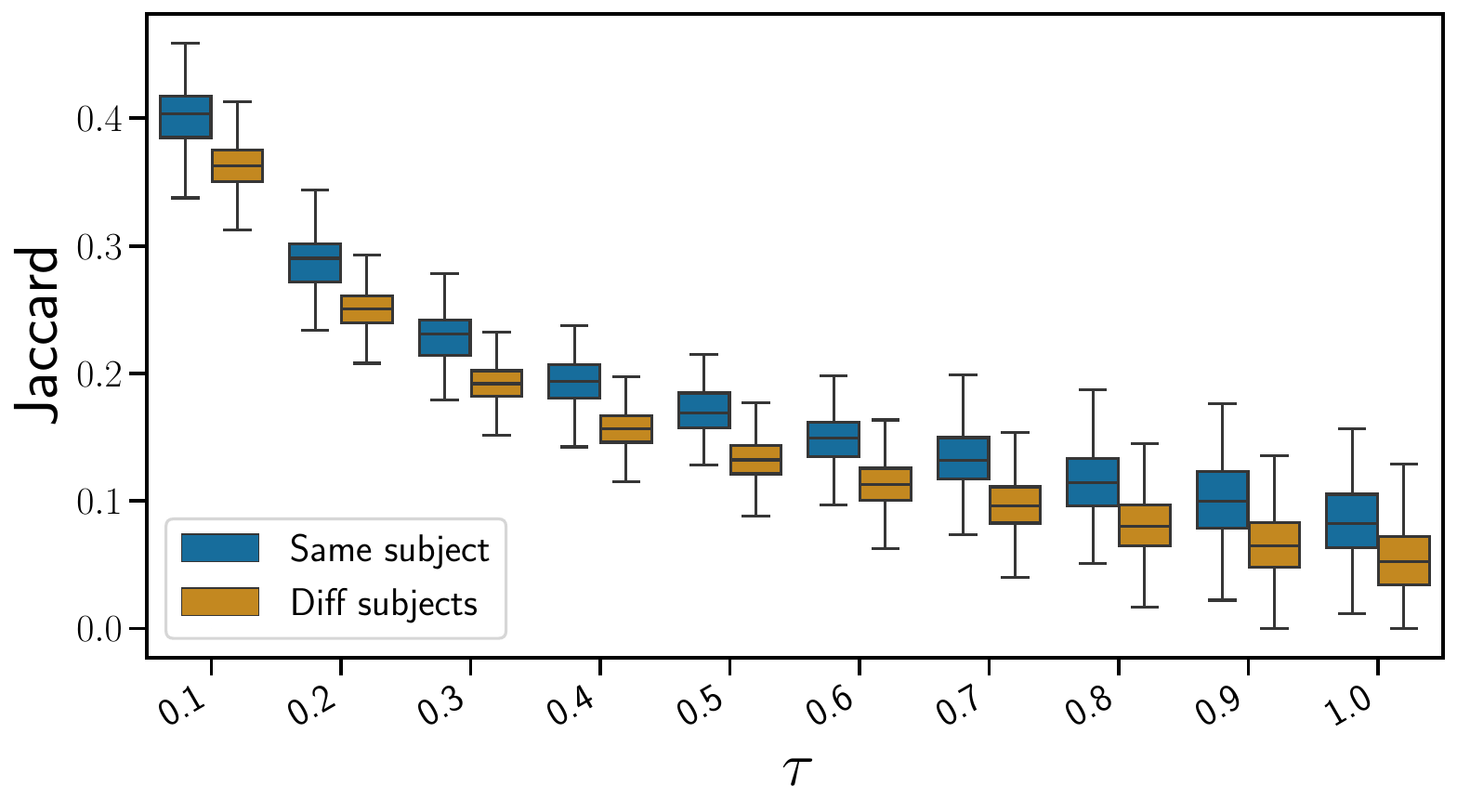}
    &
    \caption{Comparison of the Jaccard score between the individual multiscale DAGs corresponding to the same hemisphere and the \emph{same subject} at different timestamps (acquisitions) $t_1$ and $t_2$, to the Jaccard score between the individual multiscale DAGs corresponding to the same hemisphere of \emph{different subjects} at the same timestamp. The analysis is conducted at different hard-thresholding levels $\tau \in (0,1]$.}
    \label{fig:Jacc}
    \vspace{-\baselineskip}
\end{tabular}
\end{figure}

The results are depicted in Figure \ref{fig:Jacc}. The individual multiscale DAGs for the same subject are indeed more similar than those of different subjects across various hard-thresholding levels \(\tau\). 
The null hypothesis is rejected at each \(\tau\).
Thus, individual multiscale DAGs can effectively differentiate between different subjects within large groups, supporting the existence of causal fingerprinting.

% In the second case, the null hypothesis aims to determine if the similarity between \(G^i_{h,t_1}\) and \(G^i_{h,t_2}\) is statistically different from the similarity between \(G^i_{h,t_1}\) and \(G^i_{g,t_1}\) for \(g \in \{\mathrm{left}, \mathrm{right}\}\), i.e., \(\mathrm{H}_0\coloneqq \mathrm{J}(G^i_{h,t_1}, G^i_{h,t_2}) = \mathrm{J}(G^i_{h,t_1}, G^i_{g,t_2})\). 
% The results are displayed in Figure \ref{fig:Jacc} (center plot). 
% For a given subject, the MDAGs from the same hemisphere at different timestamps exhibit more similarity than those between different hemispheres, across various hard-thresholding levels \(\tau\). 
% The null hypothesis is rejected by the two-sample test for each \(\tau\), suggesting that left and right hemispheres should be analyzed separately.

% Lastly, in the third case, the null hypothesis examines whether the similarity between \(G^i_{h,t_1}\) and \(G^j_{h,t_1}\) is statistically different from the similarity between \(G^i_{h,t_2}\) and \(G^j_{h,t_2}\) for \(i,j \in [S]\), i.e., \(\mathrm{H}_0\coloneqq \mathrm{J}(G^i_{h,t_1}, G^j_{h,t_1}) = \mathrm{J}(G^i_{h,t_2}, G^j_{h,t_2})\). 
% The results are depicted in Figure \ref{fig:Jacc} (right plot). 
% Here, we observe that MDAGs from different subjects exhibit consistent similarities across different acquisition days \(t_1\) and \(t_2\). 
% The null hypothesis is not rejected up to \(\tau=0.4\), indicating that the similarity between subjects is not dependent on the day of acquisition.

% !TEX root =  ../main.tex
\section{Conclusion \textcolor{black}{and future work}}\label{sec:conclusions}
In this paper, we investigate the discovery of an MCB in brain dynamics across individuals. 
We propose a principled methodology that leverages recent advances in multiscale causal structure learning that optimizes the balance between model fit and complexity \textcolor{black}{by adopting the MDL}.
\textcolor{black}{Being a general principle, we need to tailor the MDL principle to the application's needs. 
From this perspective, \Cref{alg:search} is the result of modeling choices and assumptions appropriate for resting-state fMRI data.
In this regard, the usage of a linear model adheres to \Cref{a1}, which is supported by previous work on resting-state fMRI data (\citealp{hlinka2011functional,garg2013gaussian,novelli2022mathematical}, and references therein).
}
Our approach outperforms a baseline based on canonical functional connectivity in learning the MCB, as demonstrated on synthetic data. 
Applying this method to real-world resting-state fMRI data reveals sparse MCBs in both the left and right brain hemispheres. The multiscale nature of our approach allows us to extract that high-level cognitive functions drive causal dynamics at low frequencies, while sensory processing nodes are relevant drivers at higher frequencies.
Our analysis confirms the presence of a causal fingerprinting of brain connectivity among individuals, from a causal standpoint.

%practical implications
\textcolor{black}{We believe that our approach can be useful as an input for downstream computational models, to augment existing fingerprinting approaches \citep{van2021makes} and markers of changes in personal backbone through aging, neurodegeneration, and other pathological functional patterns \citep{cole2017predicting,pievani2014brain,hohenfeld2018resting}.}
%future work
Future work should be devoted to studying MCBs by using data from individuals performing various tasks and individuals with neurological disorders. 
\textcolor{black}{In these cases, our modeling choices need to be revised to meet the application needs (see, e.g., \citealp{huang2020causal}).
However, comparing our method to state-of-the-art functional connectivity methods on task-based fMRI data would still be insightful, as the vast majority of these latter methods are based on Pearson's correlation.}
Discovering MCBs in these contexts is crucial for understanding how brain dynamics operate when the brain is subject to stimuli, and how they are affected by neurological disorders. 
Such studies can lead to insights that may facilitate the development of targeted interventions and therapies for cognitive enhancement or the treatment of neurological conditions.

\bibliography{bibliography}

\appendix
%% TeX root main.tex

\section{Proof of Lemma \ref{lem:waveletgauss}}\label{app:lemma1}

\waveletgauss*
\begin{proof}
    Let us consider the signal $\signal_i \in \real^N$, and define $\mathbf{c}_i^{s,0}=\tilde{\mathbf{x}}_i^{s,0}=\signal_i$.
    Consider an orthogonal wavelet family defined by the high-pass and low-pass filters, $\mathbf{h} \in \real^L$ and $\mathbf{g} \in \real^L$, respectively.
    Hence we have that
    \begin{equation*}
        \begin{aligned}
            \tilde{x}_i^{s,j}[n] &= \sum_{l=0}^{L-1} h[l]c_i^{s,j-1}[n+l], \\
            c_i^{s,j}[n] &= \sum_{l=0}^{L-1} g[l]\tilde{x}_i^{s,j-1}[n+l].
        \end{aligned}
    \end{equation*}

    Thus, given Assumption \ref{a1}, $\tilde{x}_i^{s,j}[n]$ and $c_i^{s,j}[n]$ are distributed according to a zero-mean Gaussian distribution, since linear combinations of i.i.d. Gaussian variables from the $i$-th dimension of $N(\mathbf{0}, \boldsymbol{\Omega}^s)$.
\end{proof}

%% TeX root main.tex

\section{Computational cost}\label{app:cost}
In our setting $N>K$ (cf, \Cref{sec:results}). 
At each step, we have to update the family score in \eqref{eq:familyscore} for the child node of the last added $p$-persistent arc.
Let's say that the child node is $r$, and it has $K^{s,j}<K-1$ parents for the scale $j$ and individual $s$.
% According to \Cref{app:modelselection}, for evaluating the candidate $p$-persistent arc $u_{lr}^j$, we need to compute the least-squares estimates in \eqref{eq:regression}.
\textcolor{black}{According to \Cref{sec:learningMCB}, to evaluate the candidate $p$-persistent arc $u_{lr}^j$, we need to compute the least-squares estimates in \eqref{eq:regression}.}
The cost of evaluating the least-squares estimates of the regression coefficients for all the candidate $p$-persistent arcs $u_{lr}^j$ ending in $r$ is asymptotically $\mathcal{O}(N(K^{s,j}+1)^2)$.
Indeed, we first compute the pseudo-inverse of $\multidataT \in \real^{N \times (K^{s,j}+1)}$ using SVD decomposition, requiring $\mathcal{O}(N(K^{s,j}+1)^{2})$ flops.
Then, we multiply the pseudo-inverse by $\multisignal_r \in \real^N$, requiring a cheaper cost, i.e., $\mathcal{O}(2(K^{s,j}+1)N)$.
Since we can parallelize over the individuals and over the time scales, the overall cost for the update of the family score for each candidate $p$-persistent arc is $\mathcal{O}(N(K^\prime+1)^2)$, where $K^\prime=\max_{s,j} K^{s,j}$.
Now, let us define $U=\max_{j} |\candidateuniverse^j|$.
Since we might have more than one $p$-persistent arc to evaluate, the total cost for evaluating all the least-squares estimated is asymptotically upper-bounded by $\mathcal{O}(UN(K^\prime+1)^2)$.

At this point, we compute $\Delta_{RIC}^j$ for the candidate $p$-persistent arcs $u_{lr}^j$, which requires evaluating \eqref{eq:familyscore} given the least-squares estimates.
In particular, here the most demanding operation is the computation of the quadratic term, which is upper-bounded by $\mathcal{O}((2N+3)K^\prime)$.
Again, considering that we might have multiple $p$-persistent arcs, the overall cost for this step is $\mathcal{O}(U(2N+3)K^\prime)$.

Subsequently, we select the arc having the minimum absolute $\Delta_{RIC}^j$, with a cost linear in $|\candidateuniverse^j|$.
Finally, we check the acyclicity of the $j$-th layer of MCB with arc set $\estimatedbackbonesetj$, requiring $\mathcal{O}(|\estimatedbackbonesetj|)$ flops.
Based on this analysis, the most demanding step is the computation of least-squares estimates for updating the family scores, which at the beginning is done for all the $p$-persistent arcs in $\candidateuniverse^j$, in parallel over $j \in [J]$.
%% TeX root main.tex

\section{Hyper-parameters}\label{app:hyperparams}

\begin{table}[h!]
    \centering
    \caption{The table shows the values for the relevant hyper-parameters used in the results depicted in \Cref{fig:synthreadable,fig:synthfull}.}
    \resizebox{.8\textwidth}{!}{%
    \begin{tabular}{cccc}
         Method&  Persistence $p$&  Hard-thresholding $\tau$& $\ell_1$-regularization strength, $\lambda$\\
         MCB&  65&  0.15& 0.01\\
         Mutual information&  60&  0.05& N/A\\
         Pearson's correlation&  65&  0.15& N/A\\
         Partial correlation&  65&  0.25& N/A\\
         DTF&  60&  0.0& N/A\\
 PDC& 60& 0.0&N/A\\
    \end{tabular}}
    \label{tab:hyperparams}
\end{table}

The values for the relevant hyper-parameters used in the empirical assessment on synthetic data are provided in \Cref{tab:hyperparams}.
The $\ell_1$-regularization strength is only used by SS-CASTLE.
%% TeX root main.tex

\section{Metrics}\label{app:metrics}

We define the considered metrics similarly to \citet{zheng2018dags}. 
Specifically, we denote:
\begin{squishlist}
    \item $\mathrm{cp}$ as the count of true positive arcs, i.e., the number of arcs present in the ground truth.
    \item $\mathrm{cn}$ as the count of true negative arcs, i.e., the number of arcs that are absent in the ground truth.
    \item $\mathrm{tnnz}$ as the total number of arcs in the ground truth.
    \item $\mathrm{nnz}$ as the total number of estimated arcs.
    \item $\mathrm{tp}$ as the count of true positives, i.e., the number of estimated arcs present in the ground truth and with the correct direction.
    \item $\mathrm{r}$ as the count of reversed arcs, i.e., the number of learned arcs present in the ground truth but with the opposite direction.
    \item $\mathrm{fp}$ as the count of false positives, i.e., the number of learned extra arcs not present in the undirected skeleton of the ground truth.
    \item $\mathrm{e}$ as the count of missing arcs, i.e., the number of arcs in the skeleton of the learned graph that are extra compared to the skeleton of the ground truth.
    \item $\mathrm{m}$ as the count of extra arcs, i.e., the number of arcs in the skeleton of the learned graph that are missing compared to the skeleton of the ground truth.  
\end{squishlist}

With these definitions in place, we can calculate the following metrics:
\begin{squishlist}
    \item False Discovery Rate (FDR) is given by $\mathrm{fdr}=(\mathrm{r} + \mathrm{fp})/\mathrm{nnz}$.
    \item True Positive Rate (TPR) is calculated as $\mathrm{tpr}=\mathrm{tp}/\mathrm{cp}$.
    \item False Positive Rate (FPR) is determined as $\mathrm{fpr}=(\mathrm{r}+\mathrm{fp})/\mathrm{cn}$.    
\end{squishlist}

Consequently, the F1-score is equal to $2\cdot\frac{(1-\mathrm{fdr})\mathrm{tpr}}{1-\mathrm{fdr}+\mathrm{tpr}}$, the normalized Structural Hamming Distance (nSHD) is computed as $\frac{(\mathrm{r}+\mathrm{m}+\mathrm{e})}{\mathrm{tnnz}}$, and the structural Hamming similarity is $\mathrm{SHS}=1-\mathrm{nSHD}$.
%% TeX root main.tex

\section{Additional results}\label{app:addresults}

\begin{figure}
    \centering
    \includegraphics[width=\textwidth]{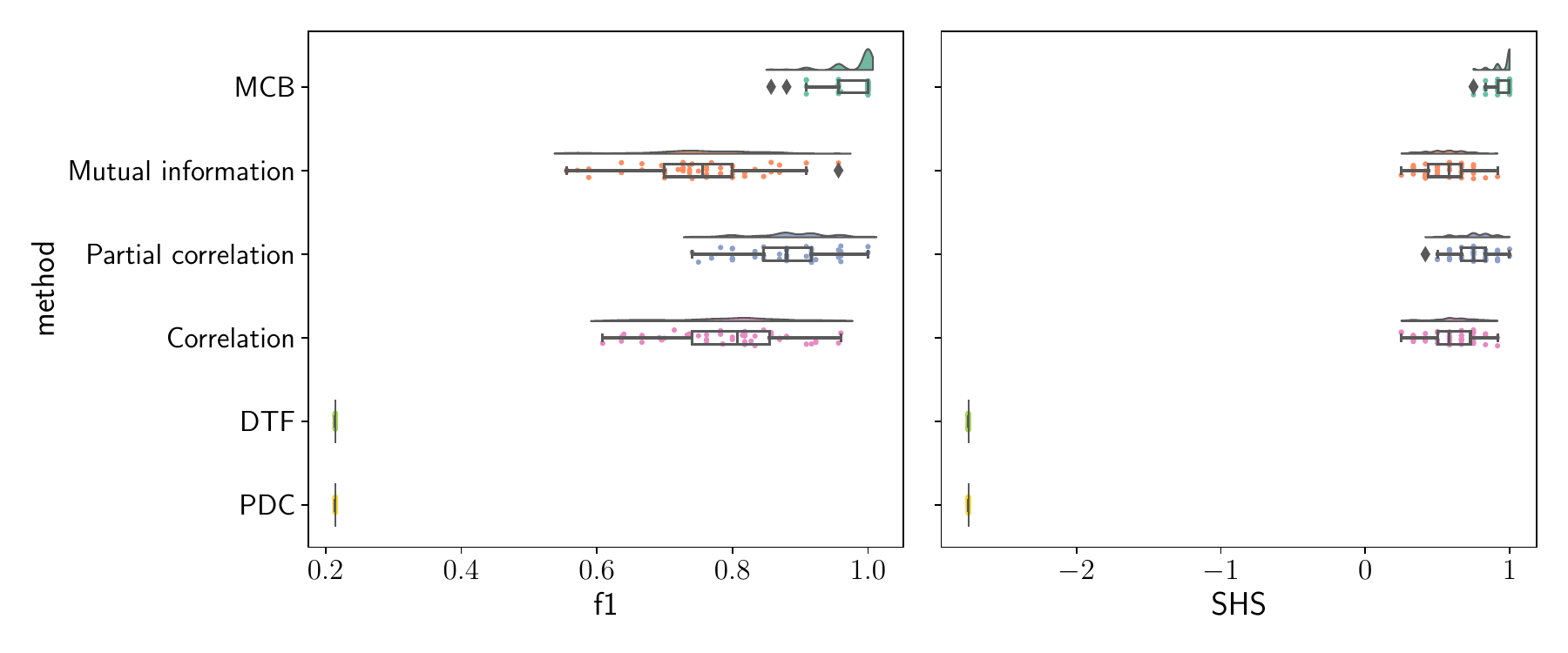}
    \caption{Rain plots depicting the (left) F1 score and (right) SHS, obtained by the considered methods over $50$ synthetic data sets.}
    \label{fig:synthfull}
\end{figure}

\subsection{Synthetic data}\label{subapp:addresults1}

Here we dive into the procedure used for testing the considered methods on synthetic data, and we provide additional discussion about the comparison.

% \spara{Data generation.}
% We consider $(S,K,N)=(10,100,1200)$, and w.l.o.g., $J=1$.
% Indeed, since the time scales are independent, the performance of our method is not affected by the number of time scales.
% We generate a strictly lower triangular masking matrix $\backbonemask \in \{0,1\}^{K \times K}$, where $[\backbonemask]_{ij} \sim B(0.25)$ and $B$ is the Bernoulli distribution. 
% The non-zero entries of this matrix correspond to the arcs in the causal backbone.
% Similarly, we generate strictly lower triangular masking matrices $\idiosyncraticmask \in \{0,1\}^{K \times K}$, where $[\idiosyncraticmask]_{ij} \sim B(0.5)$, with $s \in [S]$.
% The non-zero entries of these matrices, which are not in the backbone, represent the idiosyncratic connection for the individuals.
% Hence, $\summask = \backbonemask + \idiosyncraticmask$ is the adiaciency matrix corresponding to $G^{s}$ for individual $s$.
% At this point, we sample $\weights \in \real^{K \times K}$ from a uniform $U(-1,1)$, thus obtaining the causal matrix for each individual $\causalstructure = \summask \circ \weights, \, \forall s \in [S]$, representing the weights of the arcs of $G^{s}$.
% Finally, we generate data according to \Cref{eq:multiSEM}.
% We build $50$ data sets according to this procedure.

\spara{Baselines.}
As shown in \Cref{sec:synth_data}, we compare the proposed methodology with widely used measures of functional connectivity in terms of F1 score and structural Hamming similarity.
In detail, among the connectivity measures outlined in \Cref{sec:introduction}, we consider \emph{(i)} DTF and PDC among the existing directed measures, and \emph{(ii)} Pearson's correlation, partial correlation, and mutual information among the undirected ones.
We apply these connectivity measures to each subject $s \in [S]$ in order to retrieve individual brain connectivity networks, where each node corresponds to one of the ten synthetically generated time series.

In the case of undirected measures, the resulting individual brain connectivity networks are undirected.
For subject $s$, the weight of the arc between nodes $l$ and $m$ represents the value of the considered connectivity measure when it is computed by taking as input the time series corresponding to nodes $l$ and $m$ associated with subject $s$.
In the case of directed measures, the obtained networks are directed, but not necessarily acyclic.
In addition, given two time series, DTF and PDC evaluate the statistical dependence by analyzing the spectral Granger causality.
In brief, DTF and PDC consider $F \in \nat$ different frequencies, and for each $f \in [F]$, they return the measure of statistical dependence between the time series (refer to~\citealp{kaminski1991new,baccala2001partial} for details).
This means that, by considering all the $K$ time series, we obtain $K$ by $K$ connectivity matrices for every $f$.
Let us denote this matrix with $\mathbf{D}^{s,f}$.
Now, in order to obtain the weight of the arcs for the individual brain connectivity networks, we stack the connectivity matrices $\mathbf{D}^{s,f}$ along the frequency dimension, i.e., along $f$, thus obtaining a $K$ by $K$ by $F$ tensor $\{\mathbf{D}^{s,f}\}_{f \in [F]}$.
Hence, to aggregate this information in a matrix $\widetilde{\mathbf{D}}^s \in \real^{K \times K}$, we compute the $\ell_2$-norm along the tube dimension of the tensor corresponding to the frequency index, i.e., $f$.
At this point, we obtain the weighted adjacency of the brain connectivity network for the $s$-th individual (and thus the weights for the arcs), by computing $\widehat{\mathbf{D}}^s=\mathbf{R}^s \widetilde{\mathbf{D}}^s \mathbf{R}^s$, where $\mathbf{R}^s \in \real^{K\times K}$ is the diagonal matrix with entries $[\widetilde{\mathbf{D}}^s]_{ii}^{-1/2}, \, \forall i \in [K]$.

Once we have obtained all the brain connectivity networks, we apply hard-thresholding to retain only those arcs whose absolute weights are greater than a hyper-parameter $\tau \in \real_{+}$.
The same hyper-parameter is also used for pruning the single-scale causal DAGs returned by the single-scale counterpart of MS-CASTLE, i.e., SS-CASTLE \citep{d'acunto2023multiscale}.

At this point, for each baseline, we consider as an estimate of $\backboneset$ the set of connections that occur in more than $p$ individual brain connectivity networks, where $p<S, \, p \in \nat$.
Our approach, instead, uses the same hyper-parameter $p$ to individuate the $p$-persistent arcs in Definition \ref{def:ppersistentArc} composing $\candidateuniverse$.
Then, the score-based search procedure in \Cref{sec:learningMCB} is run over the pruned single-scale individual causal DAGs to retrieve the $\backboneset$.
Details concerning the fine-tuning of $\tau$ and $p$ are given in \Cref{app:hyperparams}.

\spara{Additional comparison with baselines.} 
\Cref{fig:synthfull} depicts the results obtained by the considered methods over $50$ synthetic data sets.
In addition to the discussion provided in \Cref{sec:synth_data}, here we see that both DTF and PDC fail in backbone discovery, returning the fully connected single-scale causal backbone (SCB) in every case.
We believe that there are two reasons behind this behavior. 
Firstly, the underlying ground truth causal structure contains only instantaneous connections, while DTF and PDC are based on the spectral formulation of Granger causality, which pertains to lagged causal connections.
Secondly, according to both methods, a relationship between two signals is identified if the measured value is different from zero for at least one frequency.
Consequently, noisy estimates can lead to the presence of spurious arcs.

\subsection{Analysis of the cardinality of individual multiscale DAGs}\label{subapp:addresults2}

% \begin{figure}
%     \centering
%     \includegraphics[width=.8\textwidth]{figs/addresults/1.0_hard_thresholding.pdf}
%     \caption{The figure depicts the behavior of the cardinality of the individual multiscale causal DAGs for the two hemispheres and different days of acquisition along $\tau$, normalized by the number of arcs of the fully connected individual multiscale causal DAG.}
%     \label{fig:cardinality}
% \end{figure}

\begin{figure}[h]
\vspace{-.5\baselineskip}
\begin{tabular}{@{} c p{.25\textwidth} @{}}
    \includegraphics[scale=0.43,valign=T]{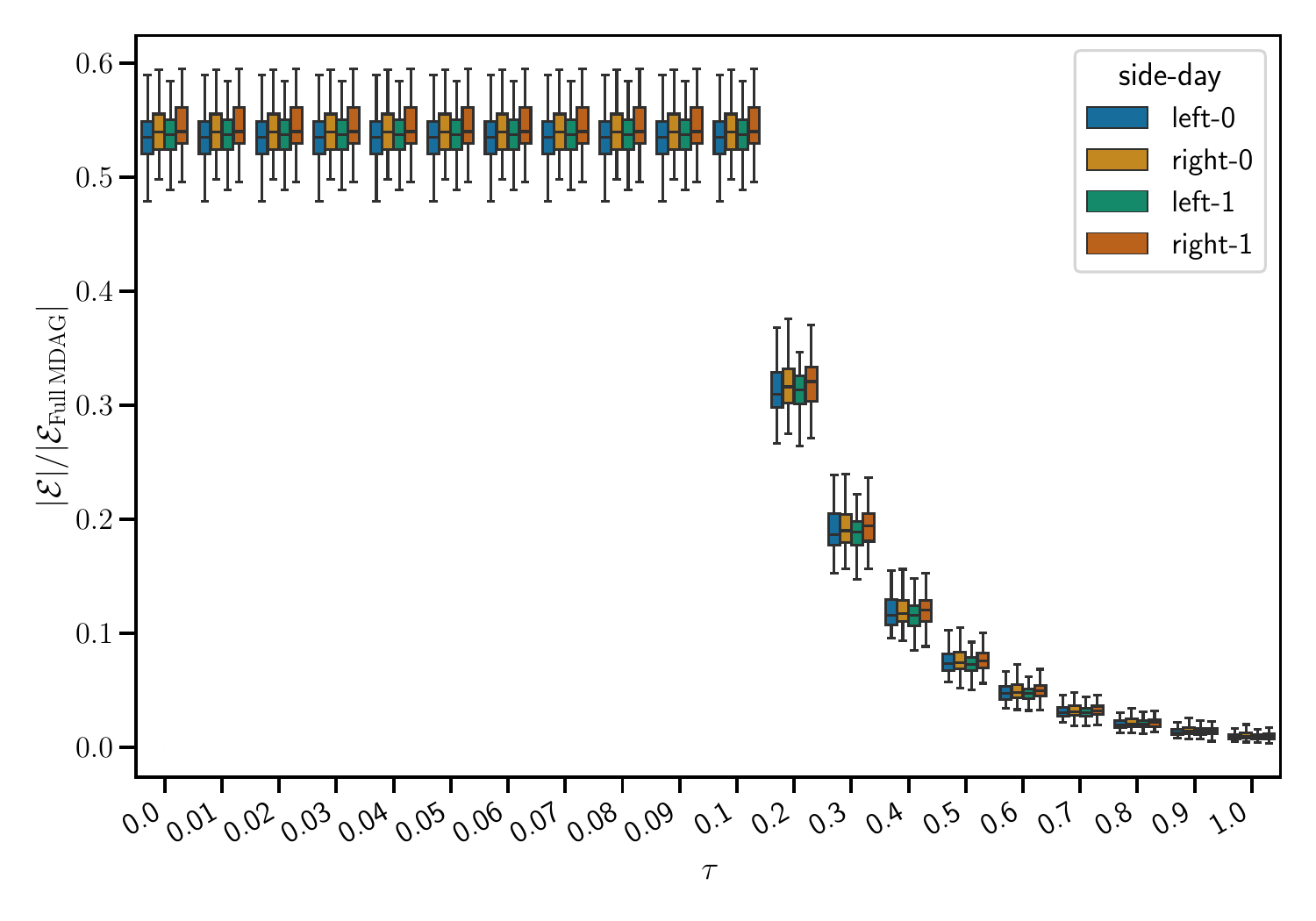}
    &
    \caption{The figure depicts the behavior of the cardinality of the individual multiscale causal DAGs for the two hemispheres and different days of acquisition along $\tau$, normalized by the number of arcs of the fully connected individual multiscale causal DAG.}
    \label{fig:cardinality}
    \vspace{-\baselineskip}
\end{tabular}
\end{figure}

\Cref{fig:cardinality} shows the impact of the hard-thresholding level $\tau$ on the cardinality of the arc set of the individual multiscale causal DAGs for the two hemispheres and different days of acquisition.
To facilitate the interpretation of the plot, we normalize the cardinality of the arc sets by the number of arcs of the fully connected individual multiscale causal DAG, i.e., $(J\cdot N \cdot(N-1))/2$.
As we see, $\tau=0.2$ is the first value (among those considered) that effectively reduces the cardinality of the arc sets.
Accordingly, we use this value in the analysis given in \Cref{sec:results}.

\subsection{Comparison with baselines and analysis of the resulting MCBs by scale}\label{subapp:addresults3}

\begin{sidewaystable}
\centering
\caption{The table shows the number of the node corresponding to each ROI. In addition, we report the macro-afferent regions.}

\resizebox{.8\textwidth}{!}{%
\begin{tabular}{|c|c|c|l|} \hline 

\textbf{Node  Number} & \textbf{Region}& \textbf{Macro-afferent Region} & \textbf{Related Functions} \\ \hline 
0 & Precentral gyrus & Central  &Motor network \citep{penfield1937somatic}\\ \hline 
1 & Superior frontal gyrus & Frontal - Lateral  &Cognitive functions, attention \citep{boisgueheneuc2006functions}\\ \hline 
2 & Superior frontal gyrus & Frontal - Orbital  &Emotion, working memory \citep{zhao2020reduced,hu2016right}\\ \hline 
3 & Middle frontal gyrus & Frontal - Lateral  &Working memory, attention, memory \citep{ridderinkhof2004role, koechlin2003architecture}\\ \hline 
4 & Middle frontal gyrus & Frontal - Orbital  &Emotion, decision making \citep{zhao2020reduced} \\ \hline 
5 & Inferior frontal opercular gyrus & Frontal - Lateral  &Speech and language processing \citep{price2012review}\\ \hline 
6 & Inferior frontal triangular gyrus & Frontal - Lateral  &Speech and language processing, working memory \citep{price2012review}\\ \hline 
7 & Inferior frontal gyrus & Frontal - Orbital  &Emotion, decision making \citep{rolls2000orbitofrontal}\\ \hline 
8 & Rolandic operculum & Central  &Speech and language processing, working memory, attention \citep{10.1093/acprof:oso/9780195177640.003.0014}\\ \hline 
9 & Supplementary Motor Area & Frontal - Medial  &Motor network \citep{nachev2008functional}\\ \hline 
10 & Olfactory cortex & Frontal - Orbital  &Olfaction \citep{gottfried2010central}\\ \hline 
11 & Superior frontal gyrus & Frontal - Lateral  &Default mode network \citep{uddin2008network}\\ \hline 
12 & Superior frontal and rectus gyri & Frontal - Orbital  &Emotion, decision making \citep{ray2012anatomical}\\ \hline 
13 & Insula & Insula  &Emotion, self-awareness \citep{craig2009you}\\ \hline 
14 & Anterior cingulate and paracingulate gyri & Limbic  &Emotion, decision making \citep{bush2000cognitive}\\ \hline 
15 & Median cingulate and paracingulate gyri & Limbic  &Emotion, pain processing \citep{vogt2005pain}\\ \hline 
16 & Posterior cingulate gyrus & Limbic  &Default mode network \citep{buckner2008brain}\\ \hline 
17 & Hippocampus & Limbic  &Memory formation, memory retrieval \citep{eichenbaum2017role}\\ \hline 
18 & Parahippocampal gyrus & Limbic  &Memory formation, memory retrieval \citep{gandolla2014re} \\ \hline 
19 & Amygdala & Subcortical Gray Nuclei  &Emotional processing \citep{ledoux2007amygdala}\\ \hline 
20 & Calcarine fissure and surrounding cortex & Occipital - Medial and inferior  &Primary visual \citep{wandell2011imaging}\\ \hline 
21 & Cuneus & Occipital - Medial and inferior  &Visual processing \citep{wandell2011imaging}\\ \hline 
22 & Lingual gyrus & Occipital - Medial and inferior  &Object recognition \citep{grill2004human} \\ \hline 
23 & Occipital gyrus & Occipital - Lateral  &Visual processing \citep{wandell2011imaging}\\ \hline 
24 & Fusiform gyrus & Occipital - Medial and inferior  &Face and object recognition \citep{kanwisher1997fusiform}\\ \hline 
25 & Postcentral gyrus & Central  &Somatosensory network \citep{iwamura1994bilateral}\\ \hline 
26 & Superior parietal gyrus & Parietal - Lateral  &Sensory integration, attention \citep{corbetta2002control}\\ \hline 
27 & Inferior parietal gyrus & Parietal - Lateral  &Default mode network, attention \citep{caspers2006human}\\ \hline 
28 & Supramarginal gyrus & Parietal - Lateral  &Language processing, working memory, memory retrieval \citep{rushworth2003left}\\ \hline 
29 & Angular gyrus & Parietal - Lateral  &Default mode network \citep{seghier2013angular}\\ \hline 
30 & Precuneus & Parietal - Medial  &Default mode network \citep{raichle2001default}\\ \hline 
31 & Paracentral lobule & Frontal - Medial  &Motor network \citep{picard1996motor}\\ \hline 
32 & Caudate nucleus & Subcortical Gray Nuclei  &Motor control, cognitive control \citep{graybiel2000basal}\\ \hline 
33 & Putamen & Subcortical Gray Nuclei  &Motor control, motor learning \citep{haber2010reward}\\ \hline 
34 & Pallidum & Subcortical Gray Nuclei  &Motor control \citep{kita2007globus}\\ \hline 
35 & Thalamus & Subcortical Gray Nuclei  &Sensory processing, motor control, cognitive functions \citep{sherman2002role}\\ \hline 
36 & Heschl gyrus & Temporal - Lateral  &Auditory processing, language processing \citep{smith2011morphometric}\\ \hline 
37 & Superior temporal gyrus & Temporal - Lateral  &Auditory processing, language processing, memory, Cognitive functions \citep{howard2000auditory}\\ \hline 
38 & Temporal pole & Temporal pole&Auditory processing, language processing, memory retrieval, episodic memory \citep{olson2007enigmatic,olson2013social}\\ \hline 
39 & Middle temporal gyrus & Temporal - Lateral  &Auditory processing, language processing, memory retrieval, episodic memory \citep{davey2016exploring}\\ \hline 
40 & Inferior temporal gyrus & Temporal - Lateral  &Visual processing, visual memory, cognitive functions \citep{grill2004human}\\ \hline 
41 & Cerebellum I II  & Cerebellum  &Motor control, coordination \citep{ito2008control}\\ \hline 
42 & Cerebellum III - VI & Cerebellum  &Motor control, coordination \citep{ito2008control}\\ \hline 
43 & Cerebellum VII - X & Cerebellum  &Motor control, coordination \citep{ito2008control}\\ \hline 
44 & Vermis & Cerebellum  &Motor control, coordination \citep{coffman2011cerebellar} \\ \hline

\end{tabular}
}
\label{tab:numbering}
\end{sidewaystable}

\begin{figure}[htb]
    \centering
    \begin{subfigure}[b]{.32\textwidth}
        \centering
         \includegraphics[width=\textwidth]{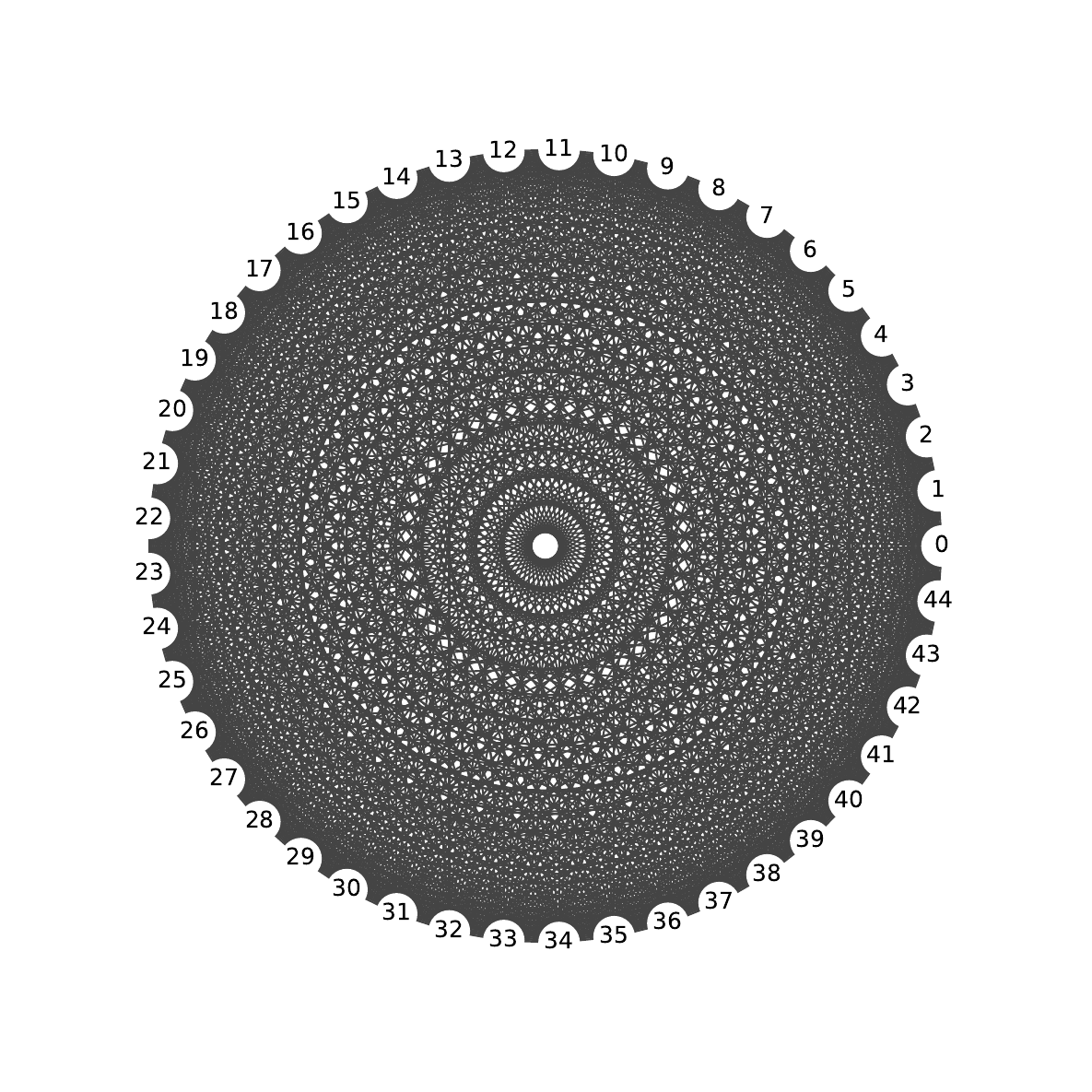}
         \caption{Pearson's correlation}
     \end{subfigure}
     \hfill
    \begin{subfigure}[b]{.32\textwidth}
        \centering
         \includegraphics[width=\textwidth]{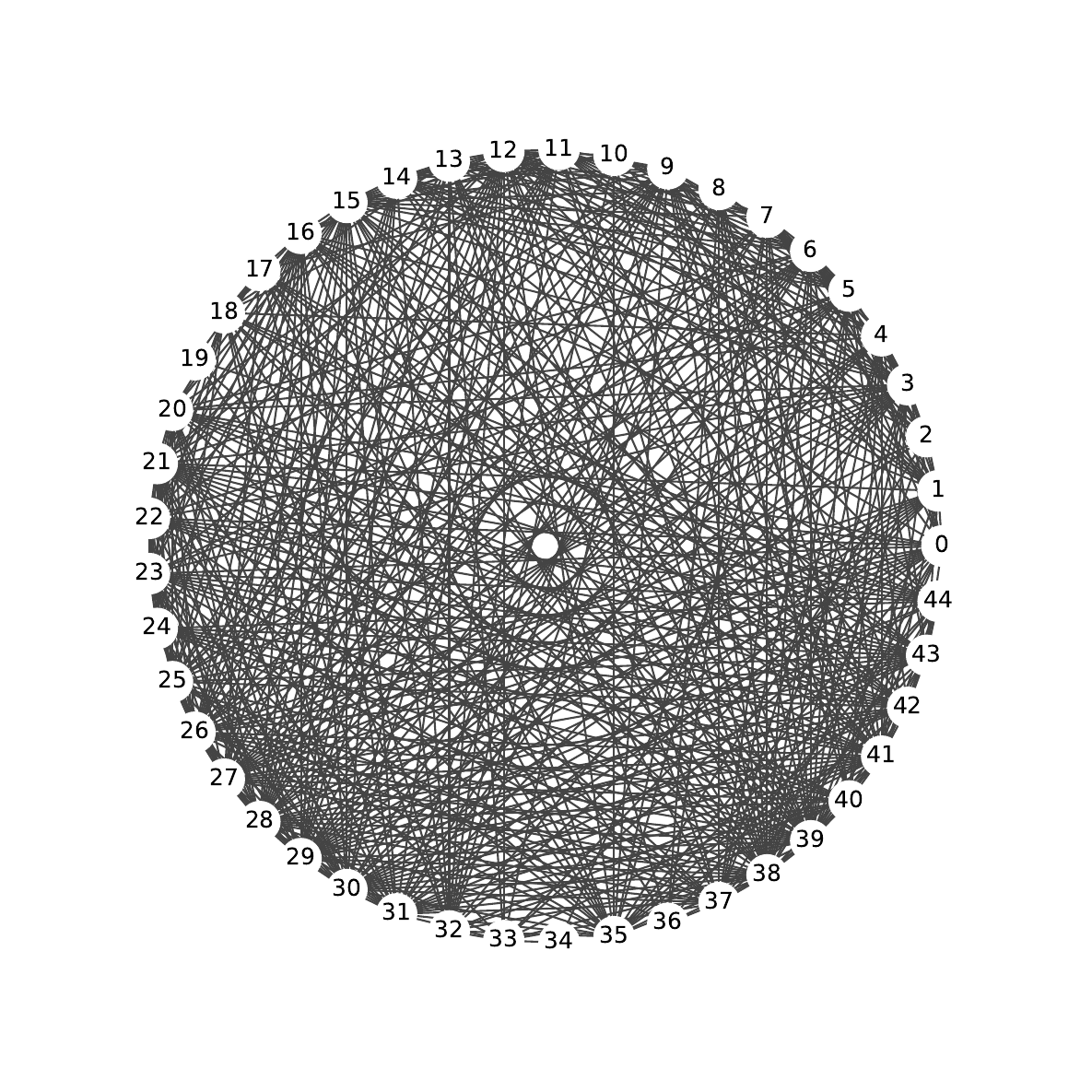}
         \caption{Partial correlation}
     \end{subfigure}
     \hfill
     \begin{subfigure}[b]{.32\textwidth}
        \centering
         \includegraphics[width=\textwidth]{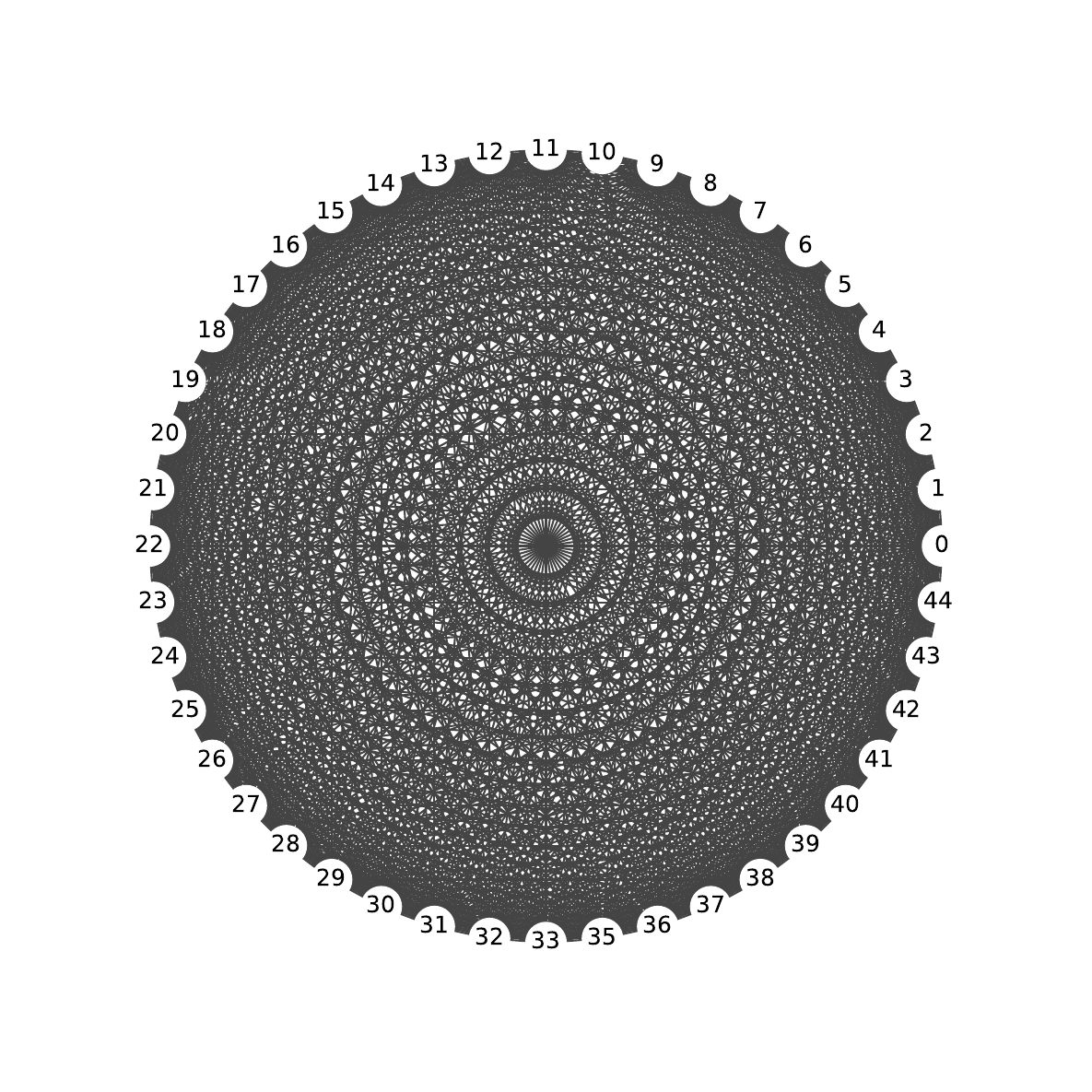}
         \caption{Mutual information}
     \end{subfigure}
     \vfill
     \hspace{15mm}
     \begin{subfigure}[b]{.32\textwidth}
        \centering
         \includegraphics[width=\textwidth]{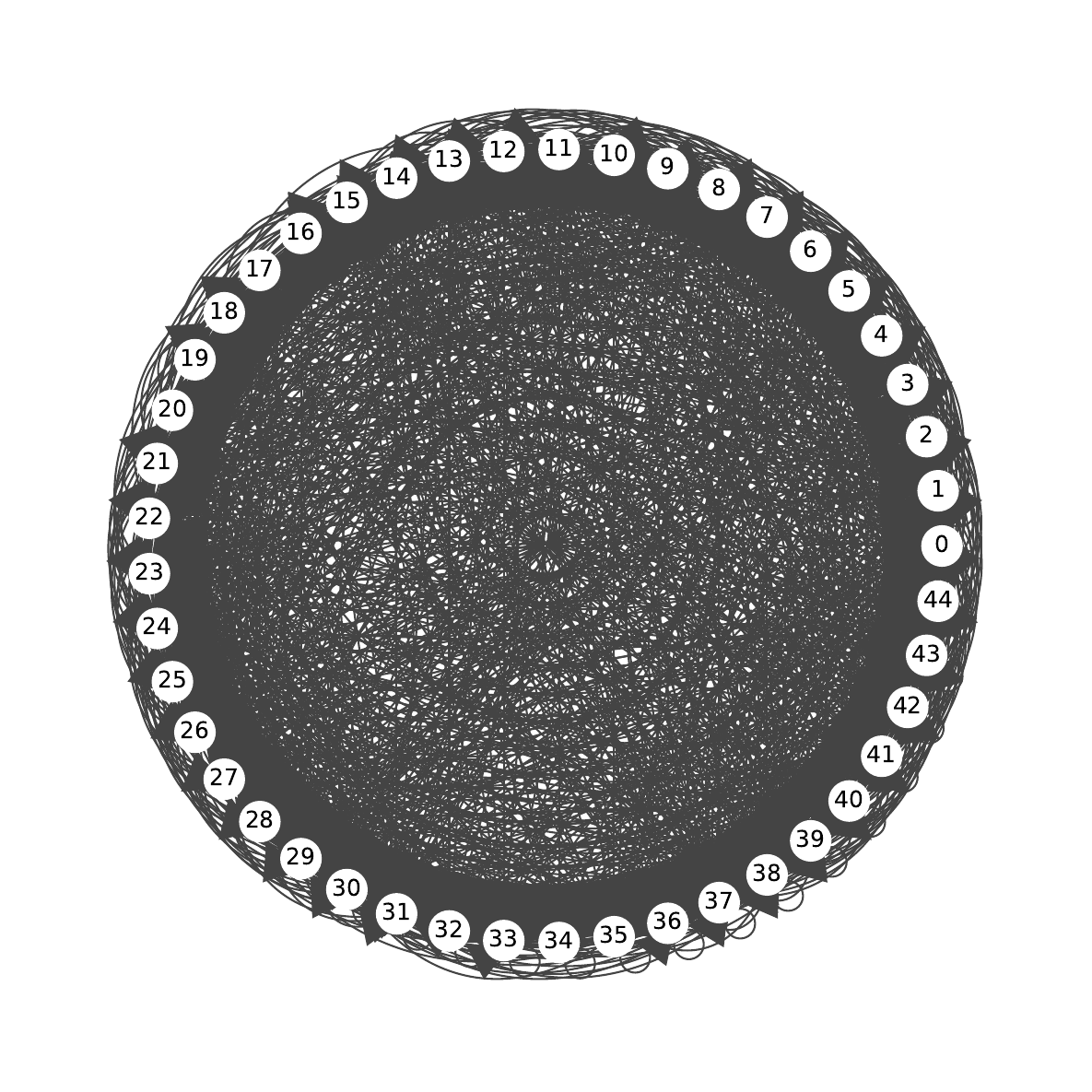}
         \caption{DTF}
     \end{subfigure}
     \hfill
     \begin{subfigure}[b]{.32\textwidth}
        \centering
         \includegraphics[width=\textwidth]{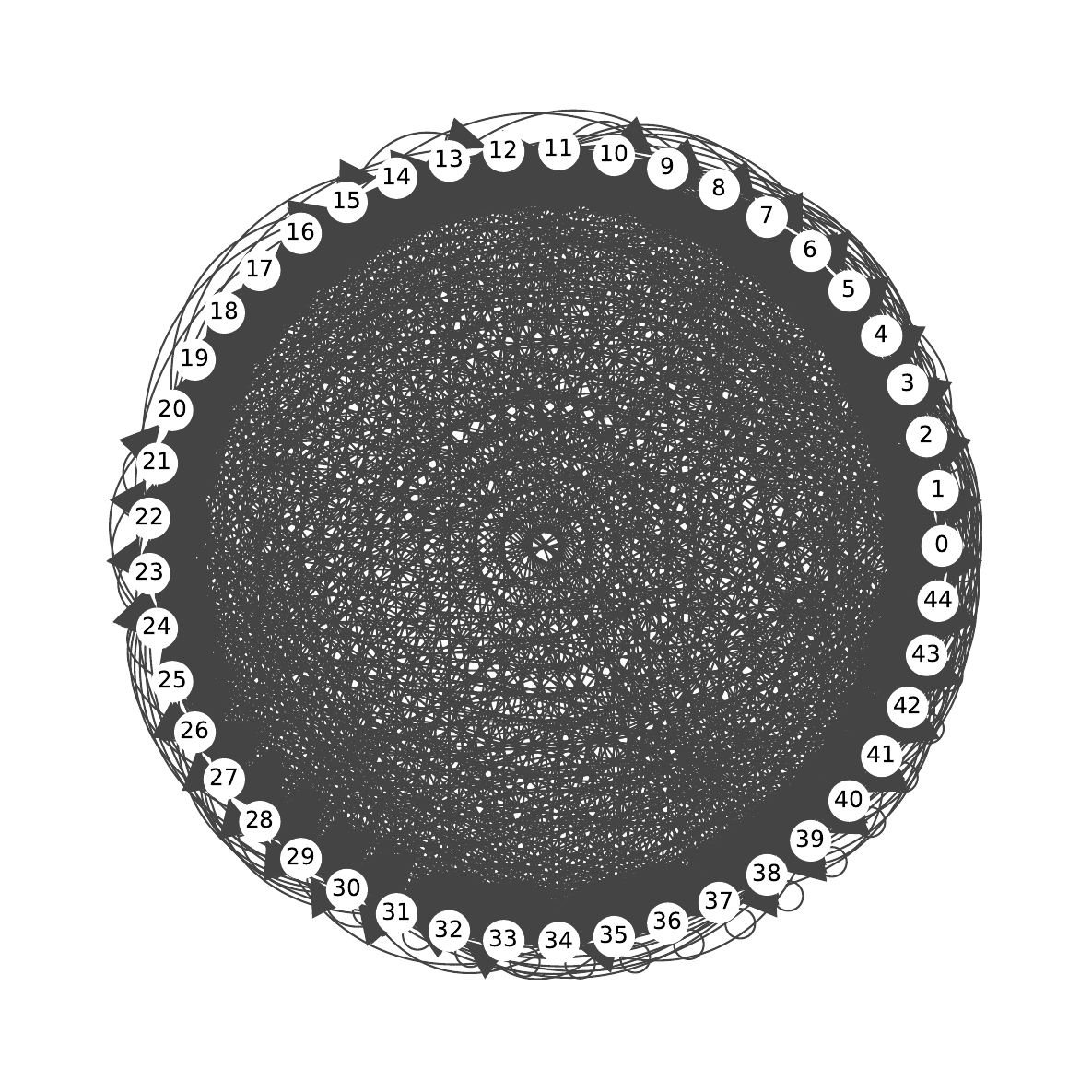}
         \caption{PDC}
     \end{subfigure}
     \hspace{15mm}
    \caption{Connectivity backbones retrieved by the considered baseline methods for the left hemisphere. ROIs numbering in \Cref{tab:numbering}.}
    \label{fig:baselinesl}
\end{figure}

\begin{figure}[htb]
    \centering
    \begin{subfigure}[b]{.32\textwidth}
        \centering
         \includegraphics[width=\textwidth]{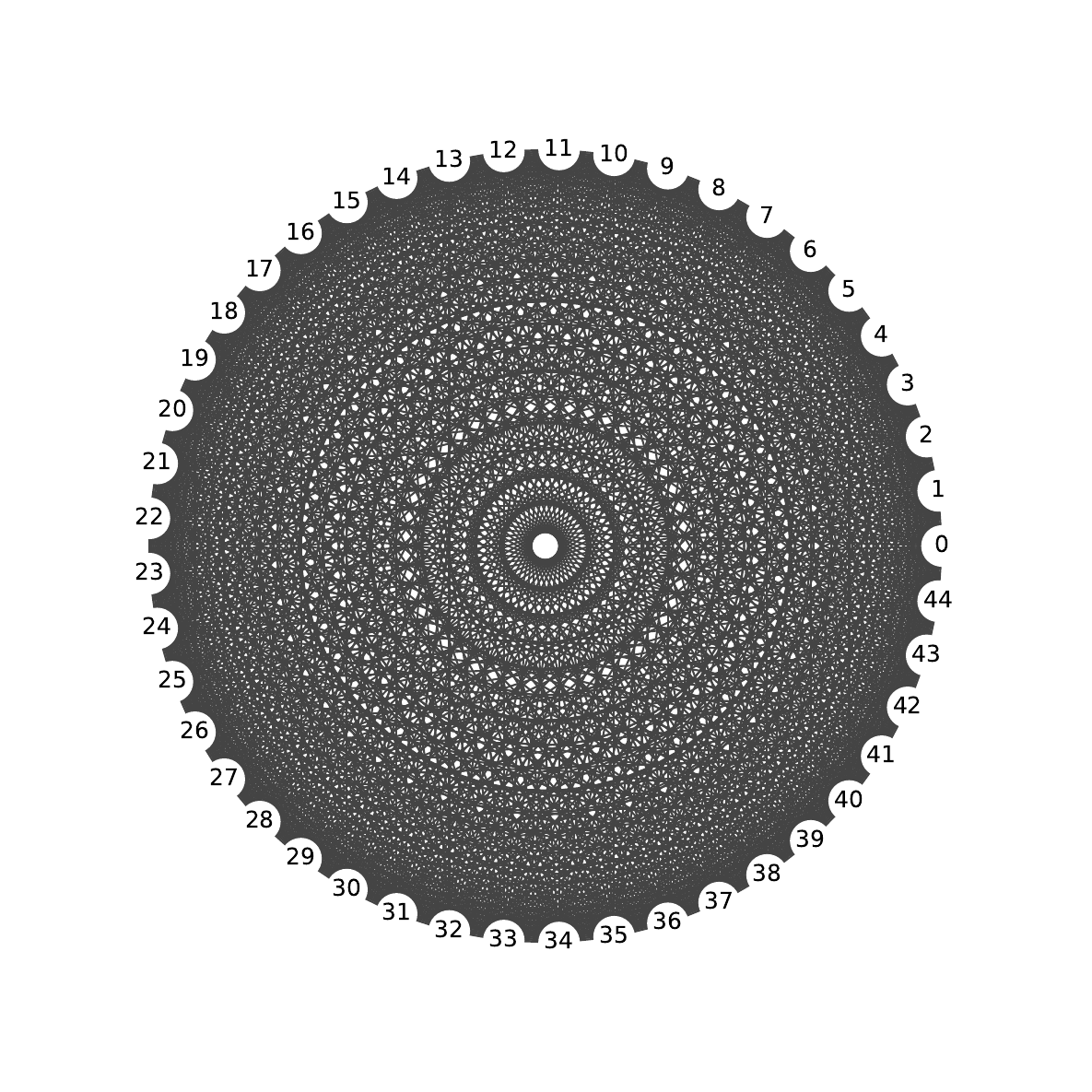}
         \caption{Pearson's correlation}
     \end{subfigure}
     \hfill
    \begin{subfigure}[b]{.32\textwidth}
        \centering
         \includegraphics[width=\textwidth]{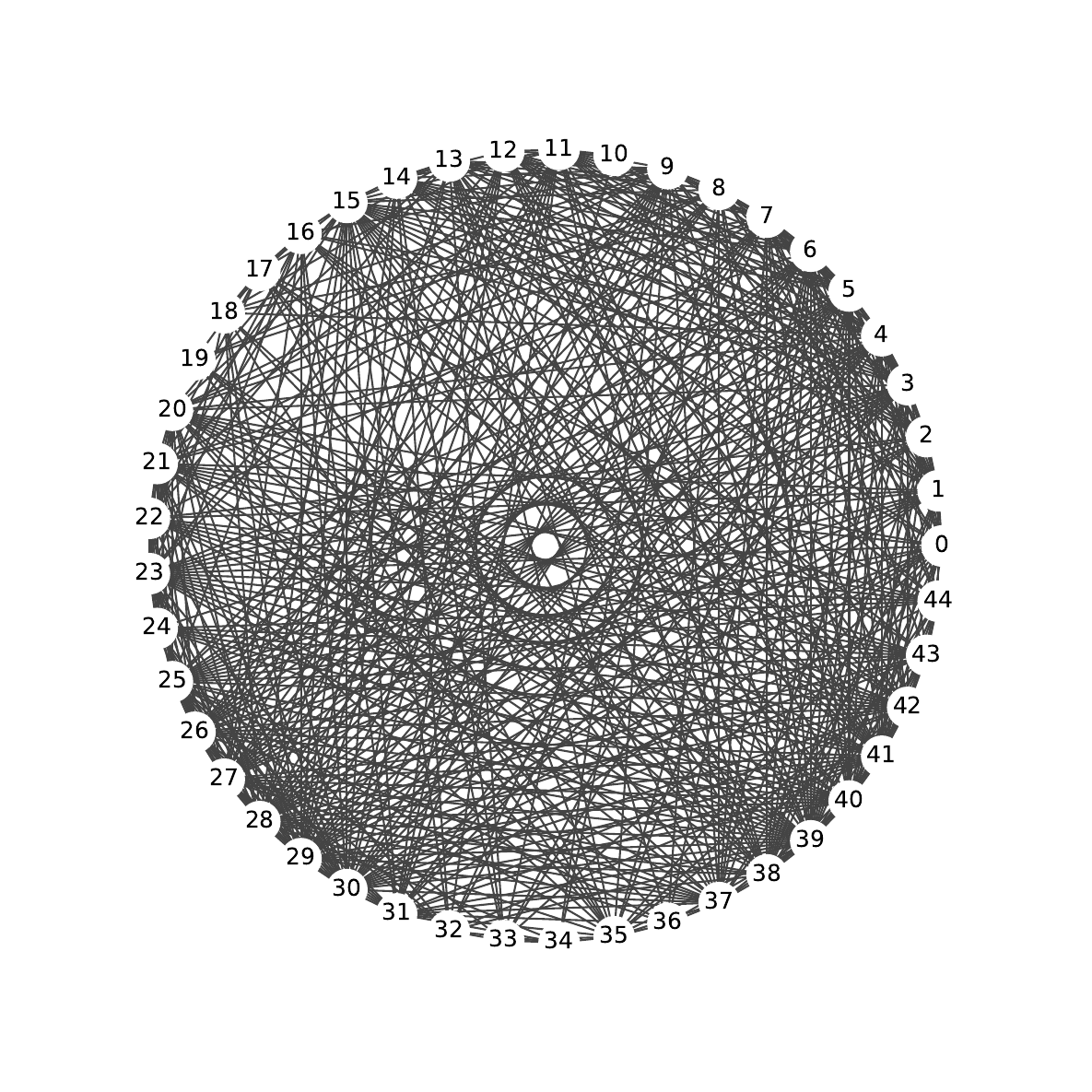}
         \caption{Partial correlation}
     \end{subfigure}
     \hfill
     \begin{subfigure}[b]{.32\textwidth}
        \centering
         \includegraphics[width=\textwidth]{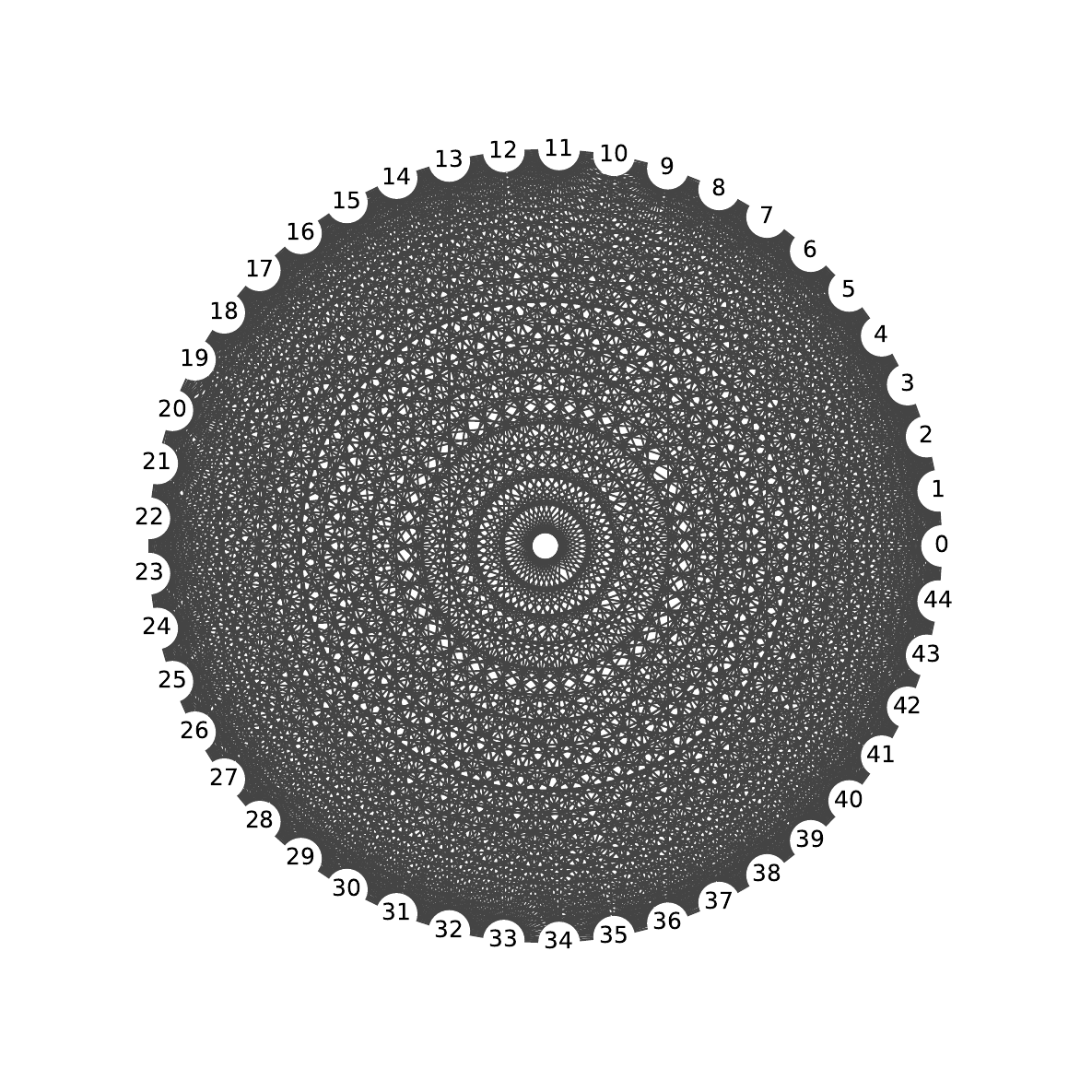}
         \caption{Mutual information}
     \end{subfigure}
     \vfill
     \hspace{15mm}
     \begin{subfigure}[b]{.32\textwidth}
        \centering
         \includegraphics[width=\textwidth]{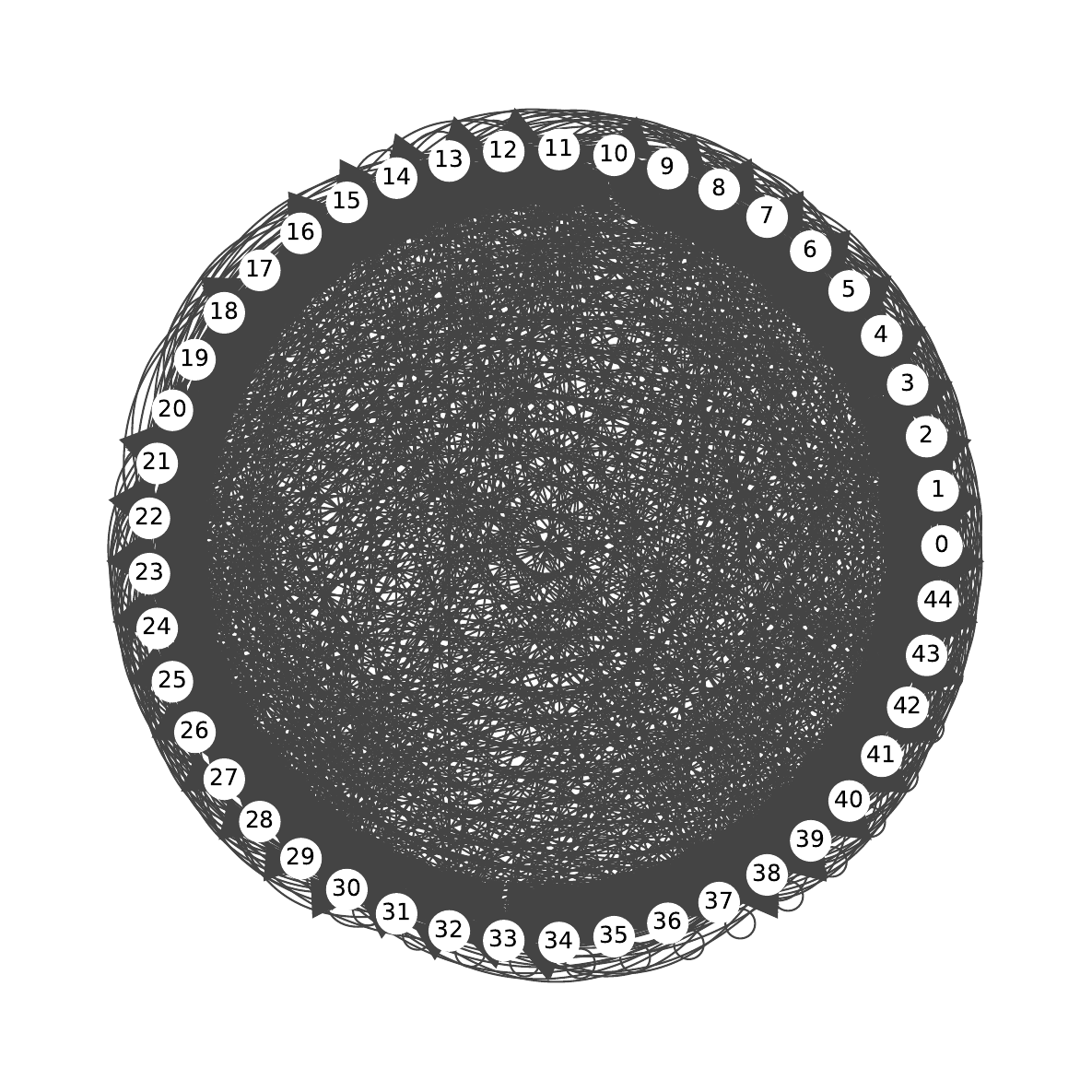}
         \caption{DTF}
     \end{subfigure}
     \hfill
     \begin{subfigure}[b]{.32\textwidth}
        \centering
         \includegraphics[width=\textwidth]{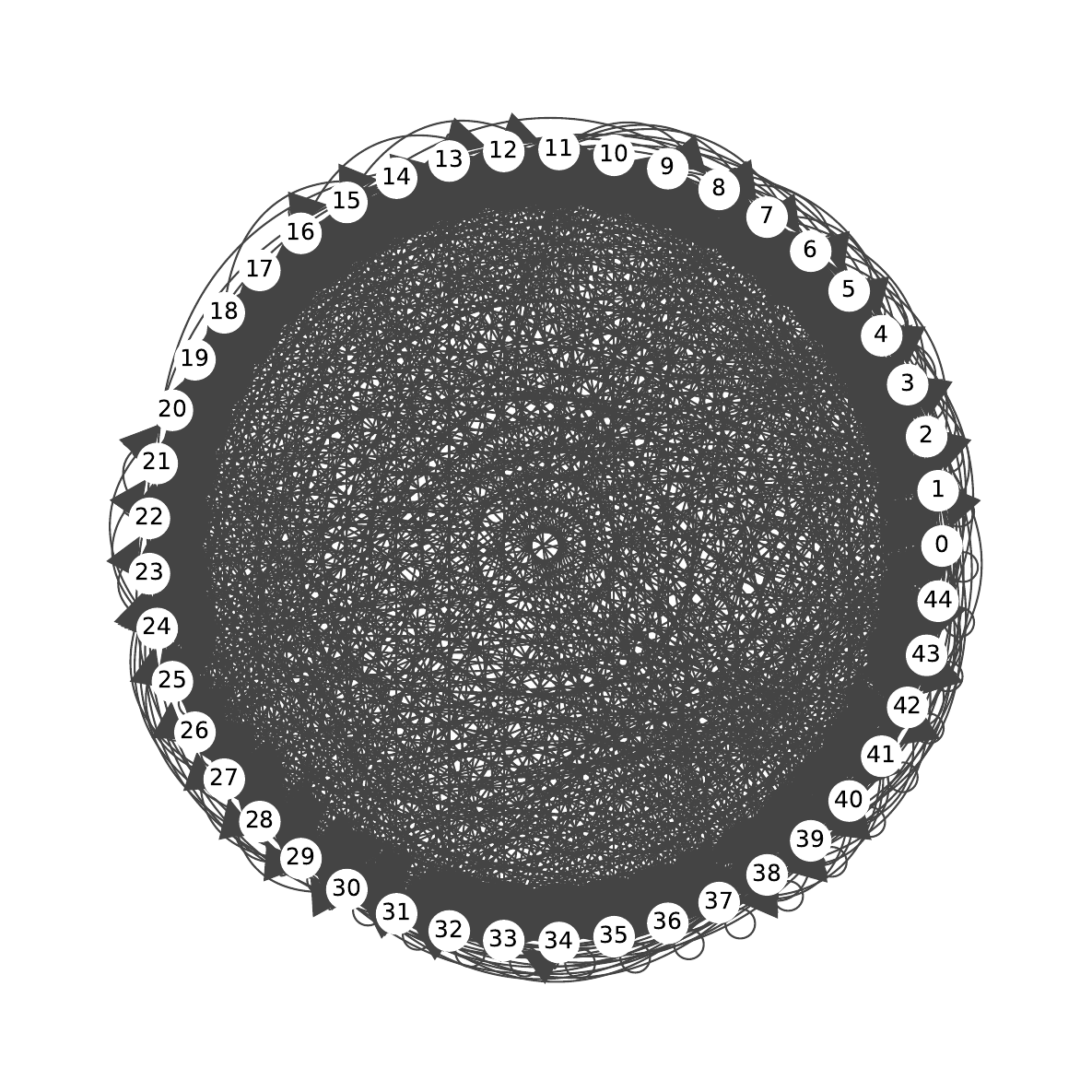}
         \caption{PDC}
     \end{subfigure}
     \hspace{15mm}
    \caption{Connectivity backbones retrieved by the considered baseline methods for the right hemisphere. ROIs numbering in \Cref{tab:numbering}.}
    \label{fig:baselinesr}
\end{figure}

\Cref{fig:baselinesl,fig:baselinesr} show that the baseline methods retrieve dense networks for both the right and left hemispheres.
Consequently, these methods suggest that the common connectivity matrix involves all the ROIs, in an intricate fashion.

\begin{figure}[htb]
    \centering
    \begin{minipage}[b]{.25\textwidth}
        \centering
         \includegraphics[width=1.0\textwidth]{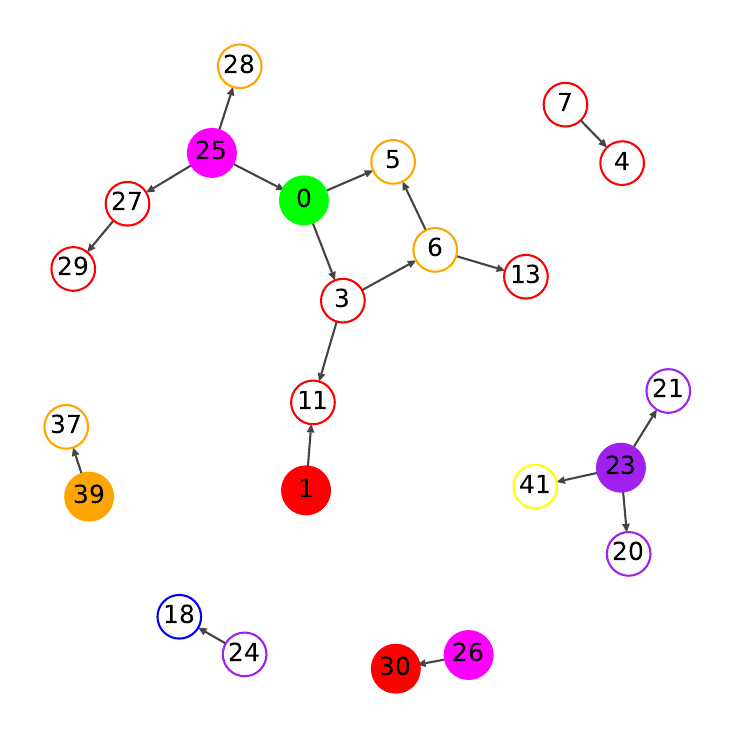}
         \subcaption{Scale $1$, left}
         \label{subfig:left1}
         % \vspace{-8mm}
     \end{minipage}
     \hfill
    \begin{minipage}[b]{.25\textwidth}
        \centering
         \includegraphics[width=1.0\textwidth]{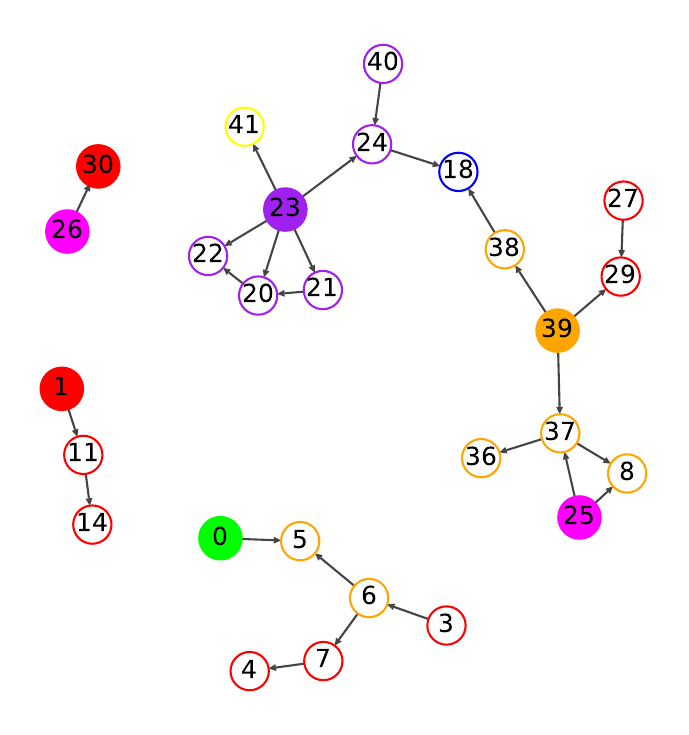}
         \subcaption{Scale $2$, left}
         \label{subfig:left2}
     \end{minipage}
     \hfill
     \begin{minipage}[b]{.25\textwidth}
        \centering
         \includegraphics[width=1.0\textwidth]{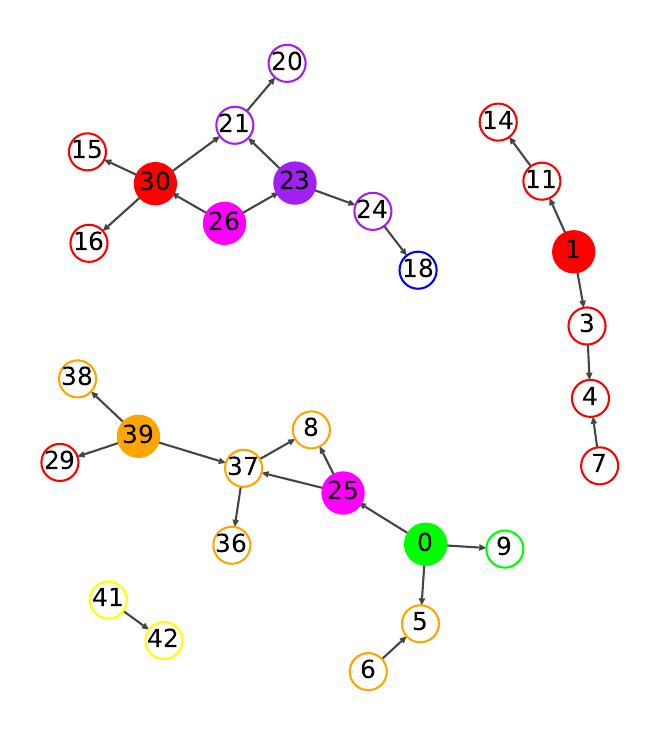}
         \subcaption{Scale $3$, left}
         \label{subfig:left3}
         % \vspace{-8mm}
     \end{minipage}
     \vfill
     \hspace{15mm}
     \begin{minipage}[b]{.25\textwidth}
        \centering
         \includegraphics[width=1.0\textwidth]{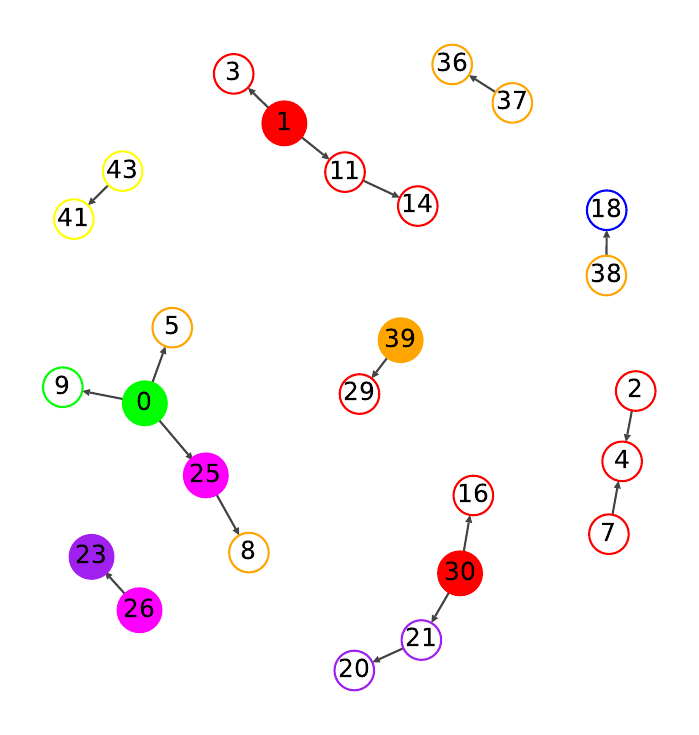}
         \subcaption{Scale $4$, left}
         \label{subfig:left4}
     \end{minipage}
     \hfill
     \begin{minipage}[b]{.25\textwidth}
        \centering
         \includegraphics[width=1.0\textwidth]{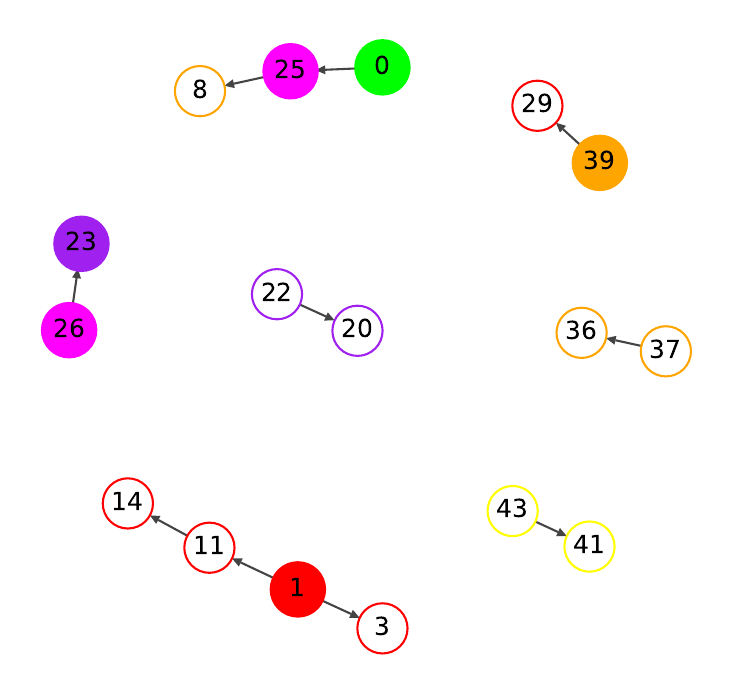}
         \subcaption{Scale $5$, left}
         \label{subfig:left5}
         % \vspace{-8mm}
     \end{minipage}
     \hspace{15mm}
     \vfill
     \begin{minipage}[b]{.25\textwidth}
        \centering
         \includegraphics[width=1.0\textwidth]{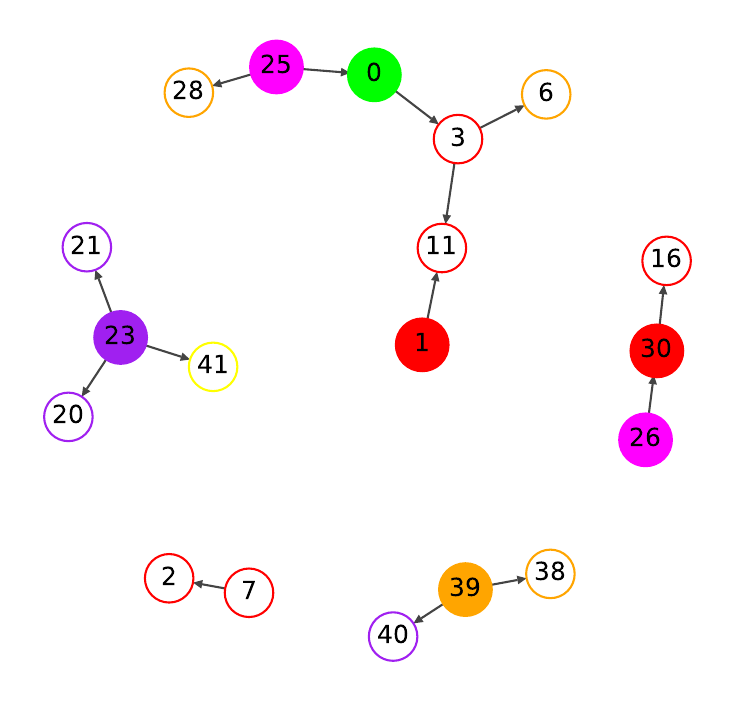}
         \subcaption{Scale $1$, right}
         % \vspace{3mm}
         \label{subfig:right1}
     \end{minipage}
     \hfill
    \begin{minipage}[b]{.25\textwidth}
        \centering
         \includegraphics[width=1.0\textwidth]{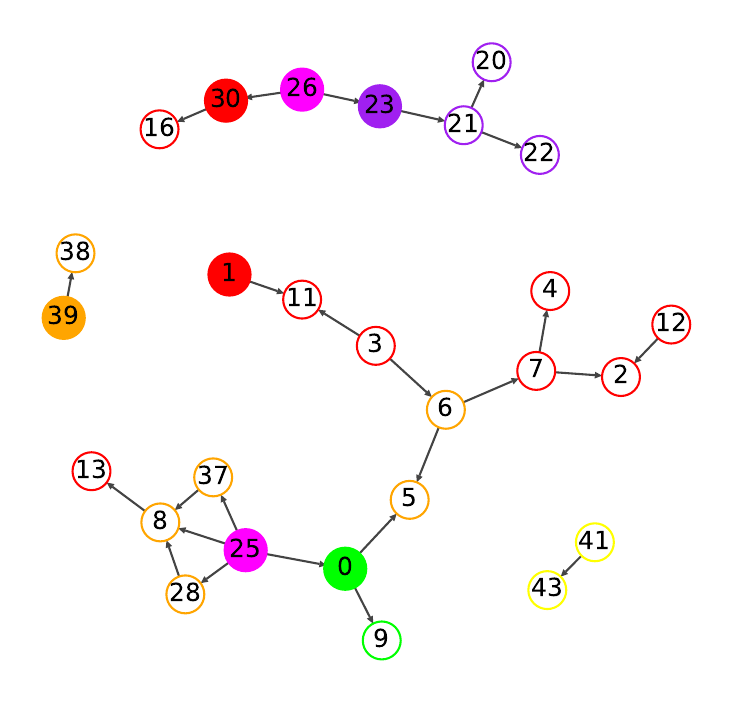}
         \subcaption{Scale $2$, right}
         \label{subfig:right2}
     \end{minipage}
     \hfill
     \begin{minipage}[b]{.25\textwidth}
        \centering
         \includegraphics[width=1.0\textwidth]{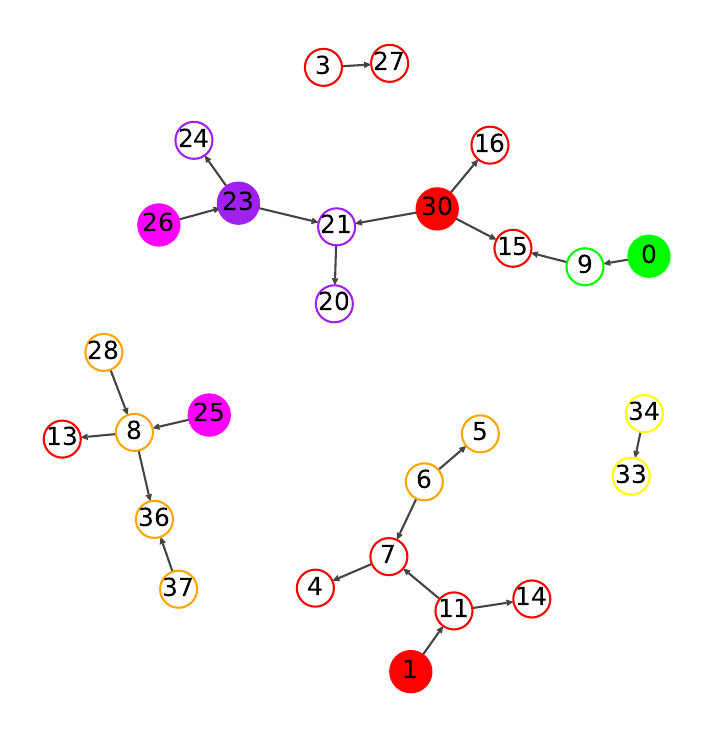}
         \subcaption{Scale $3$, right}
         % \vspace{3mm}
         \label{subfig:right3}
     \end{minipage}
     \vfill
     \hspace{15mm}
     \begin{minipage}[b]{.25\textwidth}
        \centering
         \includegraphics[width=1.0\textwidth]{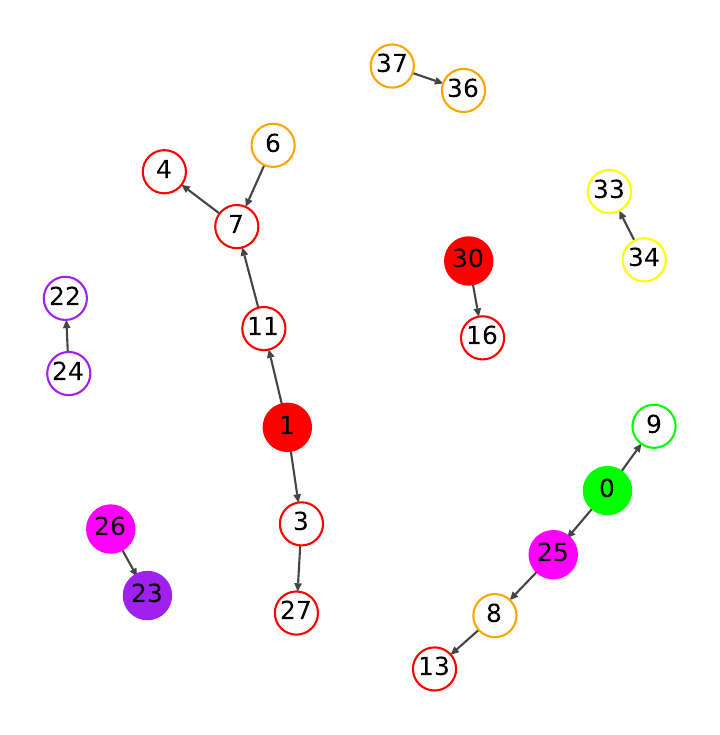}
         \subcaption{Scale $4$, right}
         \label{subfig:right4}
     \end{minipage}
     \hfill
     \begin{minipage}[b]{.25\textwidth}
        \centering
         \includegraphics[width=1.0\textwidth]{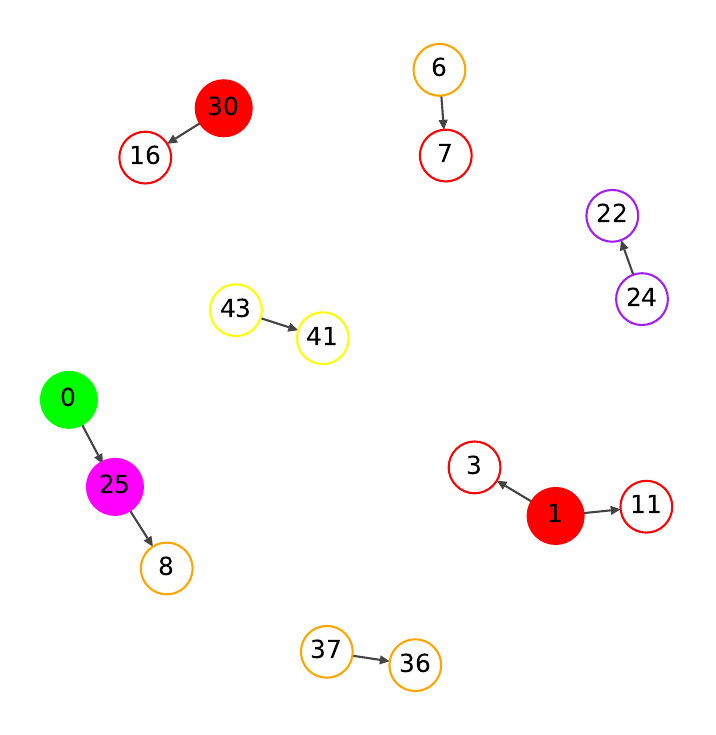}
         \subcaption{Scale $5$, right}
         % \vspace{3mm}
         \label{subfig:right5}
     \end{minipage}
     \hspace{15mm}
     % \vspace{-5mm}
    \caption{MCBs for the left and the right hemispheres. The main driver nodes are color-filled. ROIs numbering in \Cref{tab:numbering}.}
    \label{fig:MCBsfull}
    % \vspace{-5mm}
\end{figure}

Conversely, our method proposes sparse MCBs, which allow to extract meaningful information from data.
\Cref{fig:MCBsfull} depicts the obtained results for the left (depicted on top) and the right hemispheres (bottom). 
We plot separately the causal backbones for distinct time scales, from the finest on the left to the coarsest on the right.
Additionally, to better visualize the results, we use the following color coding inspired by the ROIs functions given in \Cref{tab:numbering} :
\begin{squishlisttight}
    \item \textcolor{red}{Red} for ROIs corresponding to cognitive functions, attention, emotion, and decision-making;
    \item \textcolor{orange}{Orange} for those related to auditory processing, speech and language processing, and memory;
    \item \textcolor{blue}{Blue} for those concerning memory formation and memory retrieval;
    \item \textcolor{magenta}{Pink} for those associated with sensory integration and somatosensory;
    \item \textcolor{violet}{Purple} for the ROIs within the visual network and related to the visual memory;
    \item \textcolor{applegreen}{Green} for those within the motor network;
    \item \textcolor{amber}{Yellow} for those regarding the motor control and the posture.
\end{squishlisttight}
We provide below a detailed discussion of the obtained results, organized by scales.

\spara{Scale 1 $([0.694-0.347]\mathrm{Hz}).$} 
Looking at the MCBs in \Cref{subfig:left1,subfig:right1}, corresponding to the finest scale, we observe a central role of the occipital lobe ($23$) within the visual network (VN), for both hemispheres.
It is also interesting to see the impact of the superior parietal gyrus ($26$), linked to sensory integration and attention (\citealp{corbetta2002control,culham2001neuroimaging} and references therein), on the precuneus ($30$), central node of the default mode network (DMN, \citealp{raichle2001default, buckner2008brain}). 
This means that the integration of sensory information from different modalities, such as proprioception and vision, activates the self-awareness and self-referential processing of $30$. 
Continuing on this line, we see that the postcentral gyrus ($25$), primarily involved in processing somatosensory information \citep{iwamura1994bilateral, disbrow2000somatotopic}, plays an important role.
Specifically, it has an impact on the precentral gyrus ($0$), responsible for the planning and execution of voluntary motor movements \citep{penfield1937somatic, rizzolatti2001cortical}, which then activate the middle frontal gyrus ($3$), involved in higher-order cognitive processes, such as decision-making \cite{ridderinkhof2004role,koechlin2003architecture}.
Furthermore, $25$ is shown to cause the activity of the supramarginal gyrus ($28$), which is involved in sensorimotor integration and helps to bridge the gap between sensory perception and motor action \citep{rushworth2003left}.
We also notice, on the left hemisphere, an impact on the parietal inferior gyrus ($27$), involved in cognitive processes \citep{caspers2006human}.
We attribute to this connection the same meaning as $26 \rightarrow 30$.
In addition, we see that the inferior frontal gyrus ($7$), associated in both hemispheres to emotion processing (\citealp{rolls2000orbitofrontal} and references therein), impacts the middle frontal ($4$) and superior frontal ($2$) orbital gyri, involved in emotion regulation (\citealp{zhao2020reduced} and references therein) and cognitive functions \citep{hu2016right}, respectively.
Finally, the middle temporal gyrus ($39$), which is involved in language and visual processing but also in episodic memory and higher-level cognition \citep{davey2016exploring,briggs2021unique}, is reported to influence the activity of the superior temporal gyrus ($37$), related to auditory and language-related functions \citep{howard2000auditory}, and the temporal pole ($38$) which is linked to emotion, memory and social cognition \citep{olson2007enigmatic,olson2013social}.

\spara{Scale 2 $([0.347-0.174]\mathrm{Hz}).$}
Looking at the MCBs in \Cref{subfig:left2,subfig:right2}, corresponding to the second finest scale, we still observe the key role of the $23$ within the VN, for both hemispheres.
Interestingly, we have connections from $26$ to nodes belonging to the DMN, but also a causal link toward $23$ in the VN, thus acting as a common cause.
Hence, here the integration of sensory information impacts visual processing as well. 
Furthermore, we see that the processing of somatosensory information done by $25$ is a common cause for the planning and execution of voluntary motor movements, sensorimotor integration, and auditory and language-related functions.
Among the red nodes, we see again the central role of $7$ within the orbital surface of the frontal lobe.
Also, it is interesting to see that the triangular part of the inferior frontal gyrus ($6$) mediates the effect of $3$ on $7$. 
This is consistent with the fact that the three nodes belong to the frontal executive network, which enables cognitive flexibility, decision-making, planning, inhibition of irrelevant information, and goal-directed behavior.
Additionally, the superior frontal gyrus (lateral surface, $1$), involved in cognitive functions and attention \citep{boisgueheneuc2006functions}, impacts the superior frontal gyrus (medial surface, $11$), which belongs to the DMN and thus relates to self-awareness and introspection.
Finally, $39$ tends to cause the other regions within the temporal lobe.

\spara{Scale 3 $([0.174-0.087]\mathrm{Hz}).$}
Referencing \Cref{subfig:left3,subfig:right3}, we see that also at this scale we can draw similar conclusions regarding the position of $23$ within the VN, and the common cause role of $25$ and $26$ related to sensory integration.
Another analogy with the previous scales concerns nodes $1$ and $7$, which continue to cause the DMN activity and executive control functions.
In addition, the causal graph corresponding to the left hemisphere highlights again the importance of $39$ within the temporal lobe, and its impact on the angular gyrus, part of the DMN.
However, two distinctions from the previous scales emerge.
The first concerns node $30$, which is a common cause for nodes in the DMN and the cuneus ($21$) of the VN.
Specifically, $21$ processes visual information received from $23$, as also shown in the previous two scales.
Hence, the precuneus provides context (e.g., episodic memory retrieval) to the cuneus, influencing how visual information is interpreted during mind-wandering.
The second regards the relations between nodes $0$ and $25$, belonging to the sensory-motor network \citep{mantini2007electrophysiological,smith2009correspondence}, which is now reversed.
The bi-directionality of this interaction confirms that the two regions collaborate to control and sense movements.
Node $0$ sends signals to initiate the movement, and $25$ receives sensory feedback about the position and state of the body during the action.
Then, this information is used to calibrate the subsequent voluntary action.
Finally, in the left hemisphere, the central role of $39$ within the temporal lobe is further highlighted.

\spara{Scale 4 $([0.087-0.043]\mathrm{Hz}).$}
The analysis of \Cref{subfig:left4,subfig:right4} leads to conclusions similar to scale $3$. 
Nodes $1$ and $30$ are confirmed to be key drivers for high-level cognitive processes and visual processing.
The impact of node $0$ on $25$ is reported, and the connections $26 \rightarrow 23$, $39 \rightarrow 29$ persist at this scale as well.

\spara{Scale 5 $([0.043-0.022]\mathrm{Hz}).$}
Looking at \Cref{subfig:left5,subfig:right5}, we see that here the MCBs are sparser.
While the V-structure $11\leftarrow 1 \rightarrow 3$ and the chain $0 \rightarrow 25 \rightarrow 8$ persist in both hemispheres, the connections $30\rightarrow 16$ and $39 \rightarrow 29$ appear only on the left one.
In addition, the interaction between $30$ and $21$ disappears at this frequency band.
\clearpage

\subsection{Graphical representation of the SCBs}\label{subapp:addresults4}

\begin{figure}[htb]
    \hspace{15mm}
    \centering
    \begin{minipage}[b]{.35\textwidth}
        \centering
         \includegraphics[width=1.0\textwidth]{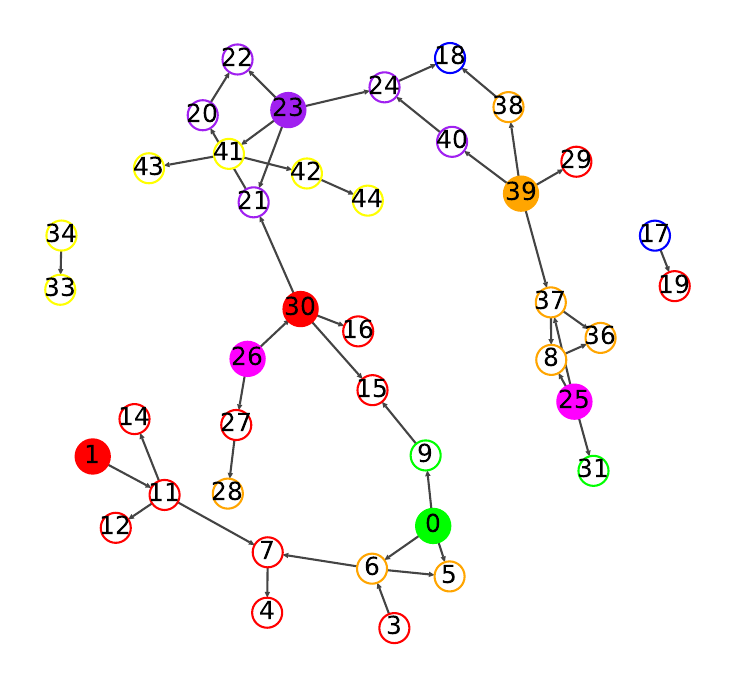}
         \subcaption{Left}
         \label{subfig:ssleft}
         \vspace{3mm}
     \end{minipage}
     \vspace{-2mm}
     \hfill
     \begin{minipage}[b]{.35\textwidth}
        \centering
         \includegraphics[width=1.0\textwidth]{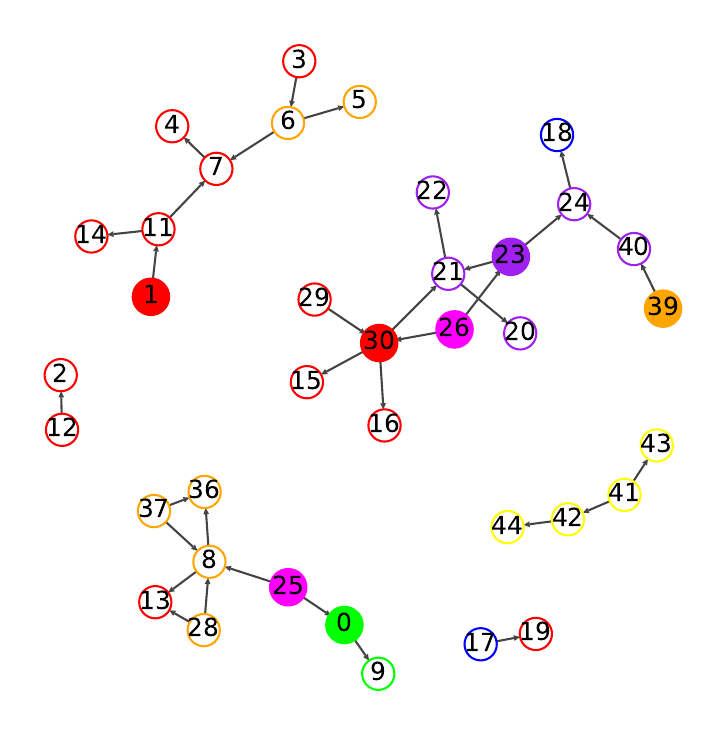}
         \subcaption{Right}
         \vspace{3mm}
         \label{subfig:ssright}
     \end{minipage}
     \hspace{15mm}
     \vspace{-2mm}
    \caption{SCBs for (\subref{subfig:ssleft}) the left and (\subref{subfig:ssright}) the right hemispheres. Color-filled nodes are the main drivers within the MCBs in \Cref{fig:MCBsfull}.}
    \label{fig:SCBs}
\end{figure}

In this appendix we provide the plots pertaining to the SCBs discussed in \Cref{sec:results}.
In order to facilitate the comparison with the MCBs in \Cref{fig:MCBsfull} we use the same numbering and color-coding for the ROIs.

\end{document}